%% file: main.tex
\pdfoutput=1
\documentclass[Afour,sageh,times]{sagej}

\usepackage[colorlinks,bookmarksopen,bookmarksnumbered,citecolor=blue,urlcolor=blue,linkcolor=blue]{hyperref}

\usepackage{graphicx}

\usepackage{xcolor}
\usepackage{makecell}

\input{./macros}
\begin{document}

\runninghead{Borquez et al.}

\title{Hamilton-Jacobi Reachability for Hybrid Systems: Unified Goal-Driven Control with Safety Guarantees}

\author{Javier Borquez\affilnum{1}, Shuang Peng\affilnum{2}, and Somil Bansal\affilnum{3}}

\affiliation{\affilnum{1}Universidad de Santiago de Chile
\affilnum{2}University of Southern California 
\affilnum{3}Stanford University}

\corrauth{Javier Borquez, Universidad de Santiago de Chile, Chile.}
\email{javier.borquez@usach.cl}

\begin{abstract}{
Hybrid dynamical systems provide a powerful modeling framework for robotic systems, particularly in contact-rich environments. However, ensuring safety and performance in such systems remains challenging due to the intricate coupling between continuous dynamics and discrete mode transitions.
In this work, we extend classical Hamilton-Jacobi (HJ) reachability analysis, a formal verification method for continuous-time nonlinear systems, to hybrid dynamical systems. 
Our framework characterizes safe sets for hybrid systems through a generalized value function defined over both discrete and continuous states while accounting for control constraints and model uncertainty.
We additionally provide a numerical algorithm to compute this value function.

Building on these safe sets, we propose two different mechanisms to integrate performance objectives.
First, we introduce a hybrid least-restrictive safety filter that intervenes on both the discrete and continuous components of a nominal controller only when necessary to avoid unsafe states, thereby preserving nominal behavior whenever possible. 
Second, we formulate and compute hybrid backward reach-avoid tubes, enabling the simultaneous enforcement of safety and goal-reaching behavior, an extension not previously addressed within hybrid HJ reachability. 
This enables the synthesis of continuous and discrete control policies that guarantee both safety and task completion. 
We validate our framework through simulation studies and real-world experiments on a quadrupedal robot, demonstrating its effectiveness in hybrid mode planning and safety-critical applications.}
\end{abstract}

\keywords{Robot Safety, Reachability Analysis, Hybrid Dynamical Systems, Safety Filtering}

\maketitle
\input{01_intro.tex}
\input{02_related_work.tex}
\input{03_problem_new.tex}

\input{04a_background.tex}
\input{04b_prior_work}
\input{05a_theory_LR.tex}
\input{05b_cases_LR.tex}

\input{06a_theory_BRAT}
\input{06b_cases_BRAT}

\input{10_hw_exp.tex}
\input{11_conclusion.tex}

\section{Statements and Declarations}
\begin{funding}
This work is supported in part by Becas Chile, the NSF CAREER Program under award 2240163, and the DARPA ANSR program.
\end{funding}

\begin{sm}
Hardware Experiments video:\\
https://doi.org/10.6084/m9.figshare.28950725.v2
\end{sm}

\bibliographystyle{SageH}
\bibliography{  Bib/bib_cbf,
                Bib/bib_hybrid,
                Bib/bib_local,
                Bib/bib_MPC,
                Bib/bib_reach,
                Bib/bib_SIAL}
\newpage

\end{document}

%% file: macros.tex
\usepackage{graphicx,caption}
\usepackage{amsmath,amssymb,mathtools}
\usepackage[ruled,vlined,titlenotnumbered]{algorithm2e} 
\usepackage{array,multirow}
\usepackage{xcolor}

\usepackage[capitalise]{cleveref}

\crefname{equation}{}{} 
\crefname{section}{Sec.}{Sec.}

\newcommand{\argmin}{\operatornamewithlimits{argmin}}

\newtheorem{remark}{Remark}

\newtheorem{theorem}{Theorem}

%% file: 01_intro.tex
\section{Introduction}

Hybrid dynamical systems are a popular and versatile tool to model robotic systems exhibiting continuous and discrete dynamics \citep{goebel2009hybrid, johnson2016hybrid}. They are “hybrid” because they contain continuous and discrete state variables, wherein the continuous flow is interleaved with discrete transitions. For instance, a jumping robot exhibits continuous dynamics as it pushes against the ground or jumps through the air, but undergoes discrete transitions when it launches or lands. However, their complexity and richness present challenges, particularly when it comes to ensuring the safety and performance of such systems. 

Controllers for such systems demand simultaneous reasoning about the continuous evolution of states and discrete mode transitions, a task conceptually and computationally demanding \citep{kong2021salted}. To safely leap over an obstacle, for example, a robot must decide not only whether to jump, but precisely when to engage the jump and at what velocity, such that the resulting trajectory clears the obstacle without midair collision. This requires careful coordination between continuous control and discrete decisions.

Hamilton-Jacobi (HJ) reachability analysis is a powerful tool for understanding and controlling hybrid dynamical systems. It characterizes the Backward Reachable Tube (BRT), the set of all initial states from which the system is guaranteed to enter a failure set, regardless of the control strategy. This BRT defines the system’s unsafe set, while its complement corresponds to the safe region, from which the system can avoid failure under appropriate control. Beyond identifying this safe set, HJ reachability also synthesizes a corresponding safety controller that ensures the system remains within the safe region over time.

The utility of HJ reachability lies in its ability to handle general nonlinear dynamics, control bounds, and dynamic uncertainty during BRT computation \citep{mitchell2005time}.
Accordingly, several methods have been developed to theoretically characterize and numerically compute BRTs for continuous-time dynamical systems; see \cite{bansal2017hamilton} for a comprehensive survey.
However, analogous frameworks for hybrid dynamical systems remain limited. These limitations—ranging from structural assumptions to restricted applicability—are discussed in detail in Section~\ref{related work}.

Our earlier work~\citep{borquez2024hybridreach} addressed key limitations in applying HJ reachability to hybrid systems by introducing a unified framework for computing BRTs in systems with nonlinear continuous dynamics, controlled and forced discrete transitions, state reset maps, and bounded disturbances. It presented a generalized hybrid Hamilton-Jacobi-Isaacs Variational Inequality (HJI-VI) to calculate the value function that captures the BRT and synthesizes the optimal continuous and discrete control strategies. However, that framework focused solely on computing hybrid BRTs for reach or avoid verification, and did not address how to integrate these guarantees with other control objectives, in particular the critical requirement for real-world applications where systems must not only stay safe, but also achieve goals. This work addresses that gap through two complementary contributions: a hybrid safety filter that can wrap any goal-oriented but potentially unsafe nominal policy and guarantee safety, and a hybrid reach-avoid formulation that formally handles both objectives jointly.

\subsection{\label{contributions}Paper Organization and Contributions}
 
The paper is organized as follows. Section~\ref{related work} surveys related work. Section~\ref{problem} introduces the hybrid system model and problem formulations. Section~\ref{background} reviews the needed background, with Subsection~\ref{background_hj} covering classical HJ reachability, and Subsection~\ref{prior_work} providing an abridged presentation of our prior hybrid HJ reachability framework~\citep{borquez2024hybridreach} for completeness. \textbf{All technical contributions of this paper begin in Section~}\ref{proof_LR}\textbf{ and beyond.} Specifically, Section~\ref{proof_LR} introduces the hybrid Least Restrictive Filter, Section~\ref{hybrid_BRAT} presents the hybrid Backward Reach-Avoid Tube formulation, Section~\ref{cases} presents validation on a real-world quadrupedal robot, and Section~\ref{conclusion} concludes and points toward future directions.

The two contributions of this paper are as follows:

\begin{itemize}
\item \textbf{Hybrid Least Restrictive Filter (hLRF)} (Section~\ref{proof_LR}): We introduce a real-time safety filtering mechanism for hybrid systems that wraps any nominal hybrid control policy and intervenes on both the continuous input and discrete transition decisions only when necessary to prevent entry into the unsafe set. The key technical challenge is providing formal safety guarantees in the presence of state resets, forced transitions, and mode-dependent dynamics, all of which are absent in continuous safety filters. We prove that the hLRF ensures forward invariance of the hBRT-defined safe set for \textit{any} nominal hybrid controller, and validate it through simulation studies.
\item \textbf{Hybrid Backward Reach-Avoid Tubes (hBRATs)} (Section~\ref{hybrid_BRAT}): We introduce a reach-avoid formulation for hybrid systems that simultaneously enforces safety and goal-reaching under worst-case disturbances and across discrete transitions. Jointly encoding both objectives in the hybrid setting requires fundamentally different mathematical structure than either in isolation, so we derive and prove a new constrained HJI-VI whose solution characterizes the hBRAT: the set of states from which the system is guaranteed to reach a target while avoiding unsafe states. This yields provably safe and goal-complete continuous and discrete control policies, which we verify through hardware experiments on a quadrupedal robot.
\end{itemize}
Together, these contributions bridge the gap between safety verification and goal-directed hybrid control, enabling both online safety enforcement and offline synthesis of provably safe, task-complete control policies.

%% file: 02_related_work.tex
\section{\label{related work}Related Work}
Reachability analysis for hybrid dynamical systems has been extensively studied by both computer science and control communities. 
Correspondingly, a number of different approaches have been proposed in the literature. 
In this section, we provide a brief overview of some of the prominent approaches, as well as the current research gaps.
Historically, one of the most well-studied methods for hybrid reachabiilty analysis is rooted in hybrid automata theory, such as timed automata \citep{alur1994theory} and linear hybrid automata \citep{alur1991hybrid}.
Reachability computations in these methods are typically based on propagation of polygonal sets under constant rate dynamics. 
Tools for automatically performing these computations have also been developed \citep{henzinger1995user, maler1995synthesis, yovine1997kronos}.
However, these methods typically impose restrictive assumptions on the underlying continuous dynamics, such as limiting the analysis to linear dynamics and not allowing continuous control inputs, restricting their direct use in robotics applications. 

Other classes of approaches extend reachability tools for continuous state and time dynamical systems to incorporate discrete switches \citep{ altin2020semicontinuity, chai2018forward, girard2013computational}. 
These approaches include zonotopes-based methods, computability theory, Taylor models, satisfiability modulo theory, Hamilton-Jacobi reachability, among others. 
For instance, zonotopes-based methods represent reachable sets as zonotopes or a mixture of zonotopes, and solve the hybrid reachability problem by propagating these sets and considering their interaction with discrete event transitions modeled as guard sets \citep{ zono_avoid_intersect_2012, zono_poly_2010, contact_zono_2023, nonlin_zono_2015}.
However, the efficient computation of reachable sets for
hybrid systems with nonlinear dynamics remains a difficult problem to solve.
Furthermore, it is challenging to account for controlled transitions in these methods, which is often a key requirement in the motion and trajectory planning of hybrid systems such as legged robots (e.g., where the system might want to transition between different discrete gaits in order to safely reach its goal).

Other reachability methods for hybrid systems rely on rigorous computable analysis theory to represent reachable sets as geometric objects \citep{ariadne_denotable_reach08,finite_time_denotable_2005}.
Taylor models have also been used in the analysis of hybrid reachable sets to provide rigorous enclosures of the set trajectories, while accounting for uncertainties and errors in the computation \citep{taylor_hyb_flow12,flow_star_2013,taylor_nonlin_guard_20}.
Satisfiability Modulo Theory (SMT) has also been used for the reachability analysis of hybrid systems.
Such methods encode dynamics and discrete mode transitions as first-order formulas over real numbers that are solved using an SMT solver \citep{large_discrete_2007, delta_reach2015}.
However, above methods typically compute an over-approximations of reachable sets, while limiting the discrete transitions to forced transitions.

Another approach for computing BRTs for hybrid dynamical systems is via HJ reachability analysis \citep{tomlin1996hybrid, lygeros1998controller, lygeros1996hierarchical}. Its advantages include compatibility with general non-linear system dynamics, formal treatment of bounded disturbances, and the ability to deal with state and input constraints \citep{bansal2017hamilton}. 
Several classical and modern works have addressed the control and safety analysis of hybrid dynamical systems through HJ reachability \citep{coll_avoid_HJI_hybrid_2017,drone_backflip_2011,air_3modes_200l,air_7modes_1999,ROA_reset_2022}.
These methods rely on an iterative algorithm to compute the BRT, wherein the BRT is iteratively refined in each discrete mode (using a continuous reach-avoid operator) based on the last computed BRT and the discrete predecessor maps. Intuitively, these predecessor maps capture the effect of forced and controlled transitions on the BRT of the system. This process is repeated until the BRT reaches a fixed point and converges. 
However, the BRT computation might require several iterations to converge and thus can be time-consuming \citep{air_3modes_200l}.

Recent advancements address some limitations of HJ reachability by generalizing the framework to account for discontinuous state changes during transitions \citep{ROA_reset_2022}, which inspired our preliminary work \citep{borquez2024hybridreach} on hybrid systems with multiple discrete modes, controlled transitions, forced transitions, and state resets. However, a critical challenge remains: enabling controllers to not only ensure safety but also achieve goal-oriented behavior. Real-world systems require unified frameworks that reason about reaching desired states while avoiding unsafe ones, which has been studied by non-reachability-based hybrid safety approaches, most prominently as hybrid control barrier functions, which hold close relation to HJ reachability~\citep{cbvf}, but collectively focus on verifying safety through candidate barrier functions—often locally defined, learned from data, or tailored to specific hybrid formulations~\citep{HCBF_filter,HCBF_learn,HCBF_inclusion}—rather than constructing global safe sets or synthesizing control policies that unify safety and goal-reaching guarantees.

%% file: 03_problem_new.tex
\section{\label{problem}Problem Formulation}
We consider a hybrid dynamical system as defined by ~\cite{lygeros2008hybrid}:
\begin{equation}\label{eq:hyb_def}
H=((Q \times X),(U \times D), f, \operatorname{Inv}, \Sigma, R),
\end{equation}
where $Q := \{q_1, q_2, \hdots, q_N\}$ is a finite set of discrete modes. We also refer to $q_i$ as the discrete state of the system.
Let $x \in X \subset \mathbb{R}^{n_x}$ be the continuous state of the system, $u \in U \subset \mathbb{R}^{n_u}$ be the continuous control input, and $d \in D \subset \mathbb{R}^{n_d}$ be the continuous disturbance in the system.
$f: Q \times X \times U \times D \to \mathbb{R}^{n}$ defines the continuous evolution of the system for each $q \in Q$. 
$\operatorname{Inv} \subseteq (Q \times X)$ is the invariant of each discrete state, and defines the set of states for which
continuous evolution is allowed.
$\Sigma$ is a finite set of discrete actions.
$R: Q \times X \times \Sigma \to 2^{Q \times X}$ is a reset relation, which encodes the discrete transitions of the hybrid system.

For simplicity, in mode $q_i \in Q$, we denote the continuous dynamics as $f_i$ and the invariant as $S_i$.
In other words, in each discrete mode $q_i \in Q$, the continuous state evolves according to the dynamics: $\dot{x} = f_i(x, u, d)$,
with $x \in S_i \coloneq \operatorname{Inv}(q_i) \subseteq X$, $u \in U$ and $d \in D$. 
Here, $S_i$ can be thought of as the valid operation domain for mode $q_i$.
\begin{figure}[t]
\begin{center} 
\includegraphics[width=0.95\columnwidth]{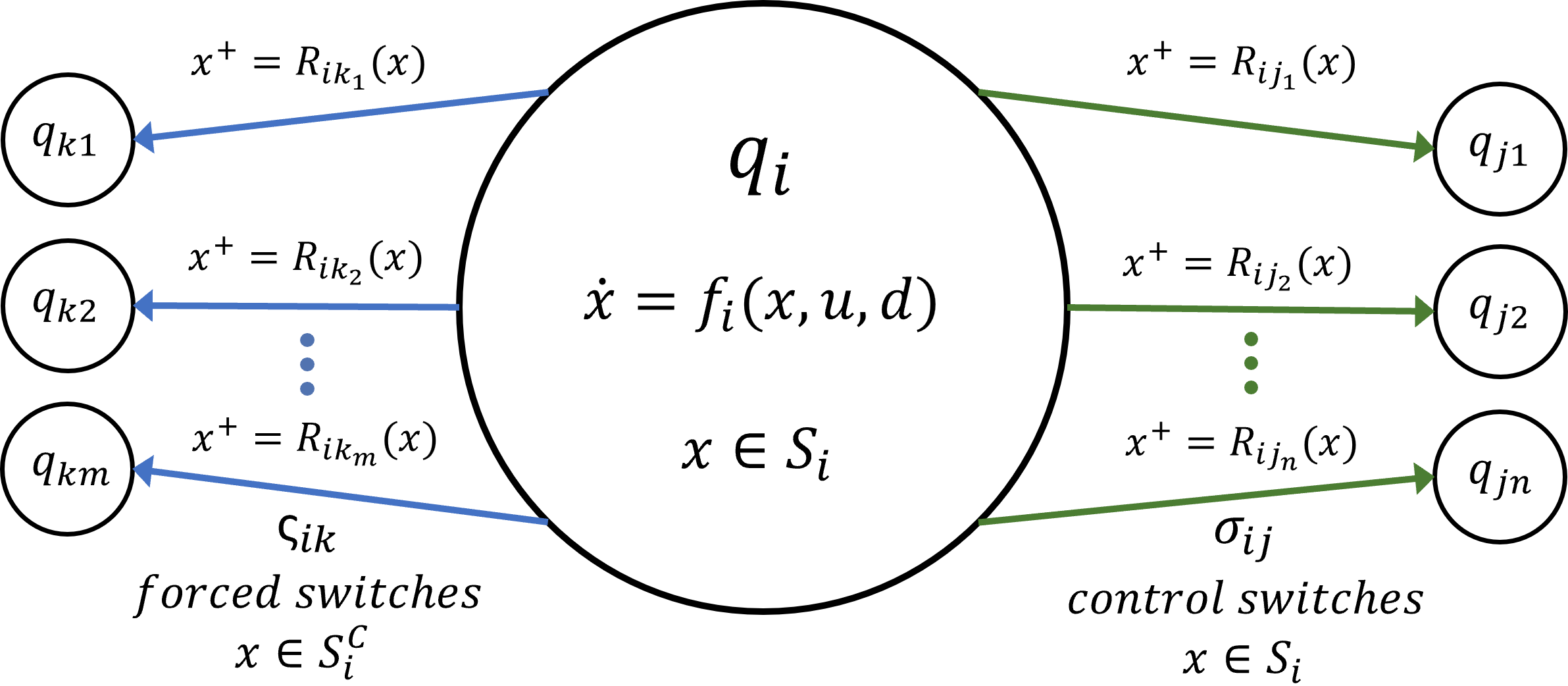}
\captionof{figure}{Hybrid dynamical system with controlled and forced transitions.}\label{fig:dyn_diagram} 
\vspace{-1.0em}
\end{center}
\end{figure}
In mode $q_i$, the system has discrete control switches $\sigma_{ij} \in \Sigma$ that allow a \textit{controlled transition} into another discrete mode $q_{j}$, where $j \in \{j_1,j_2, \hdots,j_n\} \subset \{1, 2, \hdots, N\}$.
The controlled transition can occur only when $x \in S_i$.
Whenever the evolution of the continuous state makes $x$ exit the valid operation domain for mode $q_i$ ($x \in {S_i}^C$), the system must take one of the \textit{forced control switches}, $\varsigma_{ik} \in \Sigma$.
This leads to a \textit{forced transition} into another discrete mode $q_{k}$ 
where $k \in \{k_1,k_2,...,k_m\} \subset \{1, 2, \hdots, N\}$. 
Note that the number of discrete modes the system can transition into may vary across different $q_i$.
Each discrete transition from $q_i$, whether controlled or forced, might lead to a state reset, which is given by the reset relation $R$ in \eqref{eq:hyb_def}.
For simplicity, we denote the state reset map upon transitioning from mode $q_i$ to $q_j$ as
$x^+ = R_{ij}(x)$, where $x$ is the state before the transition and $x^+$ is the state after the transition.
This hybrid dynamical model is represented in Figure~\ref{fig:dyn_diagram}.

\subsection{\label{problem_brt}Backward Reachable Tubes for Hybrid Systems}

Hybrid Backward Reachable Tubes (hBRTs) can be used to reason about both \textit{safety} and \textit{liveness} in hybrid systems. When used for safety, the hBRT identifies the set of unsafe states from which the system is guaranteed to enter a failure set within a finite time horizon, regardless of the hybrid control policy. Conversely, when used for liveness, the hBRT defines the live set of states from which the system can be steered into a desired goal despite worst-case disturbances. In this work, when mathematically addressing the hBRT, we adopt the liveness interpretation: we define the hBRT as the set of initial continuous and discrete states $(x, q)$ from which the system is guaranteed to be able to reach a goal set $\mathcal{G}$ within the time horizon $[t, T]$.

We adopt this perspective to simplify the exposition of results that follow by aligning objectives under a common value-based framework. Specifically, both the liveness hBRT and hBRAT characterize desirable outcomes (e.g., reaching a goal or avoiding failure) as subzero level sets of a value function, allowing for a unified treatment of controller synthesis through minimization of this function. When relevant, we include remarks on the corresponding safety hBRT formulation.

To mathematically define the hBRT, we follow the notation in \cite{mitchell2005time}.
Let $u(\cdot)$ and $d(\cdot)$ denote the control and disturbance functions over time. 
We further assume that these functions $u(\cdot), d(\cdot)$ are drawn from the set of functions $\mathcal{U}(t)$ and $\mathcal{D}(t)$ correspondingly.
Since the control and disturbance are playing a differential game against each other, it is important to address what information they know about each other's decisions. 

We also assume that the disturbance only uses nonanticipative strategies $\gamma \in \Gamma(t)$, where
$\Gamma(t)$ denotes the set of all nonanticipative mappings from control signals to disturbance signals, that is strategies:
\begin{align}
& \gamma \in \Gamma(t) \triangleq\{\vartheta: \mathcal{U}(t) \rightarrow \mathcal{D}(t) \mid u(r)=\hat{u}(r) \nonumber\\
& \text { for almost every } r \in[t, s] \\
& \Longrightarrow \vartheta[u](r)=\vartheta[\hat{u}](r) \text { for almost every } r \in[t, s]\} .\nonumber
\end{align}
Informally, this restriction means that if disturbance cannot distinguish between control signals $u(\cdot)$ and $\hat{u}(\cdot)$ until after time $s$, then it cannot respond differently using an arbitrary mapping $\vartheta: \mathcal{U}(t) \rightarrow \mathcal{D}(t)$ to those signals until after time $s$. Under this setting, the disturbance still has an instantaneous advantage, as it can respond after observing the current control action. 
It turns out that under nonanticipative strategies, the BRT can be obtained using a value function that is the solution of certain Hamilton-Jacobi-Isaacs PDE \citep{mitchell2005time}. 

We are now ready to formally define the liveness hBRT of the system in \eqref{eq:hyb_def}:
{\fontsize{8.8}{10}\selectfont
\begin{align}
&hBRT(t)=\{(x,q_i):\ 
 \forall d(\cdot)\in\{\gamma[u](\cdot)\mid \gamma\in\Gamma(t)\},\  \\ 
&\exists u(\cdot)\in\mathcal{U}(t),
 \sigma(\cdot),\varsigma(\cdot)\in\Sigma(t),\ 
\exists s\in[t,T]\ \text{s.t. }\ 
\zeta^{u,d,\sigma,\varsigma}_{x,q_i,t}(s)\in\mathcal{G}
\}.\nonumber
\end{align}
}
where $\zeta_{x,q_i, t}^{u, d, \sigma,\varsigma}(s)$ denotes the trajectory of the system at time $s$, starting from state $x$ and discrete mode $q_i$ at the initial time $t$, under continuous control profile $u(\cdot)$, discrete control switch profile $\sigma(\cdot)$, forced control switch profile $\varsigma(\cdot)$, and a non-anticipative disturbance strategy $d(\cdot)$.
Along with the hBRT, we are also interested in obtaining the continuous and discrete control laws, $u^*(\cdot)$, $\sigma^*(\cdot)$ and $\varsigma^*(\cdot)$, that optimally steer the system towards the goal set.

Finally, if we are interested in computing the safety hBRT, the goal set $\mathcal{G}$ should be replaced with a set of failure states for the system (such as obstacles for a navigation robot), and the role of controls and disturbance switched, thus finding the set of initial states from which getting into the failure set is unavoidable, despite best control effort.

\subsection{\label{problem_fltr}Safety Filtering for Hybrid Systems}

While the computation of the hBRT provides a rigorous characterization of the unsafe region (i.e., the set of states from which failure is inevitable) it is not sufficient on its own. In practice, safety must be maintained while the system pursues operational goals. To address this, we consider the integration of safety guarantees into the execution of a nominal policy $\pi_{nom}$, which is typically designed to optimize performance metrics like efficiency, speed, or task success, but often without formal consideration of the system’s safety constraints.

Our first objective is to synthesize a \textbf{safety filter policy} $\pi_{safe}$ such that, when used alongside $\pi_{nom}$, the hybrid system $H$ is guaranteed to avoid the failure set $\mathcal{L}$ for all times within the time horizon $[t, T]$, while modifying the nominal behavior as little as possible. Achieving this requires a formal method for determining when safety is at risk and how to intervene in a minimally invasive way. To this end, we make use of $hBRT(t)$, introduced in the previous subsection~\ref{problem_brt}, which provides the foundation for reasoning about safety in hybrid systems.

In the context of safety filtering, we assume that the nominal hybrid control policy first selects and executes any discrete transitions based on the current hybrid state and time, i.e., $\sigma_{nom}(x, q_i, t) = \sigma_{ij}$ and $\varsigma_{nom}(x, q_i, t) = \varsigma_{ik}$. This includes the use of the dummy control transition $\sigma_{ii}$ that keeps the system in the same mode when no switch is taken. These transitions result in a particular intermediate updated state $(\tilde{x}, \tilde{q}_i)$, upon which the continuous control input is then evaluated as $u_{nom}(\tilde{x}, \tilde{q}_i, t)$. This allows the safety filter to assess the effect of discrete transitions before continuous time evolution occurs.

\subsection{\label{problem_RA}Backward Reach-Avoid Tubes for Hybrid Systems}

While the previous objective ensures that the system can operate safely under an arbitrary nominal policy, it does not guarantee that the system will achieve its intended goals. In many scenarios, safety must be enforced without compromising the ability to complete a task. Therefore, we extend our formulation to avoid unsafe regions and ensure that the system reaches a desired goal set within a finite time horizon, resulting in formal guarantees of both safety and task completion.

This requirement defines our second objective, which is to compute the \textbf{Hybrid Backward Reach-Avoid Tube (hBRAT)} of this hybrid dynamical system, defined as the set of initial discrete and continuous states $(x,q_i)$ of the system such that starting from these states, for all disturbance inputs, there exists a control input that will steer the system to a goal set $\mathcal{G}$ within the time horizon $[t, T]$ without ever entering a failure set $\mathcal{L}$ before.
Using the notation defined in subsection~\ref{problem_brt}, we formally define the hBRAT of the hybrid system in \eqref{eq:hyb_def} as:
\vspace{-0.5em}

{\fontsize{8.8}{10}\selectfont
\begin{align}
{hBRAT}(t)=\{(x, q_i): \forall d(\cdot)\in\{\gamma[u](\cdot)\mid \gamma\in\Gamma(t)\},\nonumber\\
\exists u(\cdot) \in \mathcal{U}(t),
\sigma(\cdot);\varsigma(\cdot) \in \Sigma(t),\nonumber\\
\exists s \in [t, T], \zeta_{x,q_i,t}^{u, d,\sigma,\varsigma}(s) \in \mathcal{G}~\wedge 
\forall r \in [t, s]  ~\zeta_{x,q_i,t}^{u, d, \sigma,\varsigma}(r) \notin \mathcal{L} \}.
\end{align}
}
The hBRAT captures the ability to simultaneously guarantee safety and goal satisfaction, and will serve as the basis for synthesizing hybrid control policies that guarantee achieving task objectives while avoiding unsafe states. The construction and numerical solution of this set will be described in detail in later sections.

%% file: 04a_background.tex
\section{\label{background} Background}
\subsection{\label{background_hj}Hamilton-Jacobi Reachability}

One way to compute the BRT or the more general Backward Reach-Avoid Tubes (BRAT) for continuous-time dynamical systems is through Hamilton-Jacobi reachability analysis. 
Within our framework this can be considered as a system with only one discrete mode and no discrete transitions.

The BRAT computation is formulated as a zero-sum game between control and disturbance. This results in a robust optimal control problem that can be solved using the dynamic programming principle. 
First, a goal function $g(x)$ and failure function $l(x)$ are defined such that their sub-zero and super-zero level sets corresponds to the goal set $\mathcal{G}$ and failure set $\mathcal{L}$, i.e. $\mathcal{G} = \{x : g(x)\leq 0\}$ and $\mathcal{L} = \{x : l(x)\geq 0\}$. The BRAT seeks to find all states that could enter $\mathcal{G}$ within the time horizon without ever getting into $\mathcal{L}$ during that time.
This is captured by the cost function:
\begin{multline} 
J(x, t, u(\cdot), d(\cdot))=\\\min _{s \in[t, T]} \max 
\{g(\zeta_{x, t}^{u, d}(s)), \max _{r \in[t, s]} l(\zeta_{x, t}^{u, d}(r))\},
\end{multline}
where $\zeta_{x, t}^{u, d}(s)$ denotes the continuous trajectory of the system at time $s$, starting from state $x$ at initial time $t$, under continuous control profile $u(\cdot)$, and a non-anticipative disturbance strategy $d(\cdot)$.

The objective is to find the minimum cost for optimal system trajectories. Thus, it considers the optimal control that minimizes this cost (drives the system towards the goal) and the worst-case disturbance signal that maximizes the cost (drives the system towards the failure set and/or away from the goal). The value function corresponding to this robust optimal control problem is:
\begin{equation}\label{eq:hji}
 V(x, t)=\adjustlimits\max _{d \in D} \min _{u \in U} \{J(x, t, u(\cdot), d(\cdot))\}.
\end{equation}
The value function in (\ref{eq:hji}) can be computed using dynamic programming, which results in the following final value Hamilton-Jacobi-Isaacs Variational Inequality (HJI-VI) \citep{Fisac15,herbert2020safe}:
\begin{equation}\label{eq:hji_vi}
\begin{array}{c}
\max [
\min \{D_{t} V(x, t)+\mathcal{H}(x, t), l(x)-V(x, t)\},\\ g(x)-V(x, t)]=0\\ \\
V(x, T)=\max[l(x),g(x)].
\end{array}
\end{equation}
$D_t$ and $\nabla$ represent the time and spatial gradients of the value function. $\mathcal{H}$ is the Hamiltonian, which optimizes over the inner product between the spatial gradients of the value function and the dynamics to compute the optimal control and disturbance:
\begin{equation} \label{eqn:ham}
\mathcal{H}(x, t)=\max _{d \in D} \min _{u \in U}\nabla V(x, t) \cdot f(x, u, d).
\end{equation}
For a detailed derivation and discussion of the HJI-VI, we refer the interested readers to \cite{mitchell2005time,herbert2020safe} and \cite{bansal2017hamilton}. 

Once the value function is obtained, the BRAT is given as the sub-zero level set of the value function:
\begin{equation}
BRAT(t)=\{x: V(x, t) \leq 0\}.
\end{equation}
The corresponding optimal control can be derived as:
\begin{equation}\label{eq:optctrl}
u^{*}(x, t)=\argmin_{u \in U} \max_{d \in D}\nabla V(x, t) \cdot f(x, u, d).
\end{equation}
The system can guarantee reaching the goal set without entering the failure set as long as it starts inside the BRAT and applies the optimal control in (\ref{eq:optctrl}) at the BRAT boundary. The optimal disturbance can be similarly obtained.

Multiple computation tools exist to compute the value function and obtain the BRAT and the optimal controller.
This include methods that solve the HJI-VI in \eqref{eq:hji} numerically \citep{bansal2020provably, bui2022optimizeddp,mitchell2004toolbox,hj_reach_ASL2023} or using learning-based methods \citep{bansal2021deepreach,fisac2019bridging}.
However, one of the key limitations of classical HJ reachability analysis is assuming continuous dynamics and not accounting for discrete mode switching or state resets.

%% file: 04b_prior_work.tex
\subsection{\label{prior_work} HJ Reachability for Hybrid Systems}
This subsection covers the core contribution of our preliminary paper \citep{borquez2024hybridreach}, which centers around Theorem~\ref{theorem1}, an extension of the classical HJ reachability framework for the calculation of hBRTs of dynamical systems with controlled and forced transitions, as well as state resets.
\begin{theorem}\label{theorem1}
Consider the hybrid dynamical system $H$ as defined in \eqref{eq:hyb_def}.
Let $g(x)$ be an implicit representation of the goal set $\mathcal{G}$, i.e. $\mathcal{G} = \{x : g(x)\leq 0\}$.
Also let the value function $V(x, q_i, t)$ be the solution of the following constrained HJI-VI:\vspace{-0.5em}
\begin{subequations} \label{eq:hybrid_hji_brt}
\begin{align}
\text{If}~ x &\in S_i,  \nonumber \\
\min\{ &\min_{\sigma_{ij}}V(R_{ij}(x), q_{j}, t) - V(x, q_i, t), \nonumber \\ 
&D_{t}V(x, q_i, t)+ \max_{d \in D} \min_{u \in U} \nabla V(x, q_i, t)\cdot f_i(x,u,d), \nonumber \\
&g(x)-V(x, q_i, t)\} = 0. \label{eq:hybrid_hji_brt_a} \\[3pt]
\text{If}~x &\notin S_i, \nonumber \\
&\min_{\varsigma_{ik}}\{V(R_{ik}(x), q_{k}, t) - V(x, q_i, t)\} = 0 .\label{eq:hybrid_hji_brt_b}
\end{align}
\end{subequations}
with terminal time condition:
\vspace{-0.4em}
\begin{equation}
V(x, q_i, T) = g(x). 
\end{equation}
Then, the liveness hBRT for the system is given as:
\begin{equation}
hBRT(t)=\{(x, q_i): V(x, q_i, t) \leq 0\}.
\end{equation}
\end{theorem}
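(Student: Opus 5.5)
The approach is to define the hybrid value function directly as an optimal control problem and show it satisfies the constrained HJI-VI \eqref{eq:hybrid_hji_brt} by a dynamic programming argument, mode by mode. Define
\[
V(x,q_i,t) = \max_{\gamma\in\Gamma(t)} \min_{u(\cdot),\sigma(\cdot),\varsigma(\cdot)} \min_{s\in[t,T]} g\big(\zeta^{u,\gamma[u],\sigma,\varsigma}_{x,q_i,t}(s)\big).
\]
By construction, $V(x,q_i,t)\le 0$ iff some admissible hybrid control drives the trajectory into $\mathcal{G}$ before $T$ against every nonanticipative disturbance. This gives the level-set characterization $hBRT(t)=\{V\le 0\}$ once we show that $V$ is the (viscosity) solution of \eqref{eq:hybrid_hji_brt}. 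Uniqueness is assumed or handled by standard comparison arguments. The terminal condition $V(x,q_i,T)=g(x)$ is immediate, since $s=T$ is the only choice.

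Next, establish a hybrid dynamic programming principle. For small $\delta>0$, the optimal hybrid strategy from $(x,q_i,t)$ does one of three things:
\begin{itemize}
\item[(a)] It stops immediately, because the minimum over $s$ is attained at $s=t$. This gives $V\le g(x)$.
\item[(b)] It takes a discrete switch at time $t$. Since switches are instantaneous and $V$ is defined with the post-reset state, this gives $V(x,q_i,t)\le V(R_{ij}(x),q_j,t)$ for any controlled $\sigma_{ij}$ when $x\in S_i$.
\item[(c)] It flows continuously in mode $q_i$ for time $\delta$. This gives $V(x,q_i,t)\le \max_\gamma\min_u V(\zeta(t+\delta),q_i,t+\delta)$.
\end{itemize}
Optimality means one of these holds with equality. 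Hence, for $x\in S_i$, $V$ equals the minimum of the three candidate values. For $x\notin S_i$, continuous evolution is prohibited and a forced switch must be taken, so $V(x,q_i,t)=\min_k V(R_{ik}(x),q_k,t)$. This is exactly \eqref{eq:hybrid_hji_brt_b}.

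Then pass to the infinitesimal form in case (c). Expanding $V(\zeta(t+\delta),q_i,t+\delta)-V(x,q_i,t)$ to first order, dividing by $\delta$, and letting $\delta\to 0$ yields the Hamiltonian term $D_tV+\max_d\min_u\nabla V\cdot f_i$. This follows the continuous argument of \cite{mitchell2005time}, and uses the instantaneous informational advantage of nonanticipative disturbances to justify the max–min order. Rewriting "$V$ equals the minimum of three options" as "each difference is $\ge 0$ and at least one is $=0$" produces \eqref{eq:hybrid_hji_brt_a}. Where $V$ is not differentiable, the same inequalities are verified with smooth test functions in the viscosity sense.

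The main obstacle is the dynamic programming step across switches. Nonanticipative disturbance strategies must be concatenated with discrete switching sequences, and possible Zeno-like accumulations of instantaneous resets must be ruled out, so that the infimum over hybrid controls genuinely splits into "switch now" versus "flow for $\delta$". To handle this, I would assume finitely many switches on $[t,T]$ (or a minimum dwell time), and argue by induction on the number of switches. Each induction step uses the continuous-mode result together with the reset values as a boundary or obstacle term.
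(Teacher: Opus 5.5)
Your proposal follows essentially the same route as the paper. The paper defers this proof to its preliminary work, but its proof of Theorem~\ref{theorem2} mirrors it step for step: it splits the decision at $(x,q_i,t)$ into an immediate controlled switch versus stopping or flowing in $q_i$, applies dynamic programming with a first-order Taylor expansion over a short step $\delta$ to obtain $D_tV+\max_d\min_u\nabla V\cdot f_i$, combines the options by a pointwise minimum, and handles $x\notin S_i$ through the forced-switch minimum. Your viscosity and finite-switching remarks add rigor the paper leaves implicit.

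One caution: with zero-cost switches, standard comparison arguments do not give uniqueness. Take two modes with $S_1=S_2=X$ and mutual identity-reset switches. This forces $V(\cdot,q_1,\cdot)=V(\cdot,q_2,\cdot)$, after which \eqref{eq:hybrid_hji_brt_a} only demands $V\le g$ and $D_tV+\max_d\min_u\nabla V\cdot f_i\ge 0$. When $\nabla g$ and $f_i$ are bounded, $g(x)-M(T-t)$ satisfies both for every large $M$, so uniqueness genuinely has to be assumed, as the paper also tacitly does.
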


Intuitively, Theorem~\ref{theorem1} updates the value function for a mode $q_i$ by selecting the best outcome among all discrete transitions available from that mode. As illustrated in Figure~\ref{fig:opt_decision}, the outer minimization in~\eqref{eq:hybrid_hji_brt_a} chooses either (i) the best discrete switch $\sigma_{ij}$ with its associated reset map $R_{ij}(x)$, reflected in the first term, or (ii) remaining in the same mode $q_i$, which yields the continuous evolution of the value function represented by the second and third terms corresponding to the classical HJ reachability formulation.
To formally account for the possibility of staying in the current mode, we introduce a ``dummy'' discrete control $\sigma_{ii}$ that keeps the system in mode $q_i$. 
\begin{figure}[h!] 
\begin{center} 
\vspace{-0.5em}
\includegraphics[width=0.95\columnwidth]{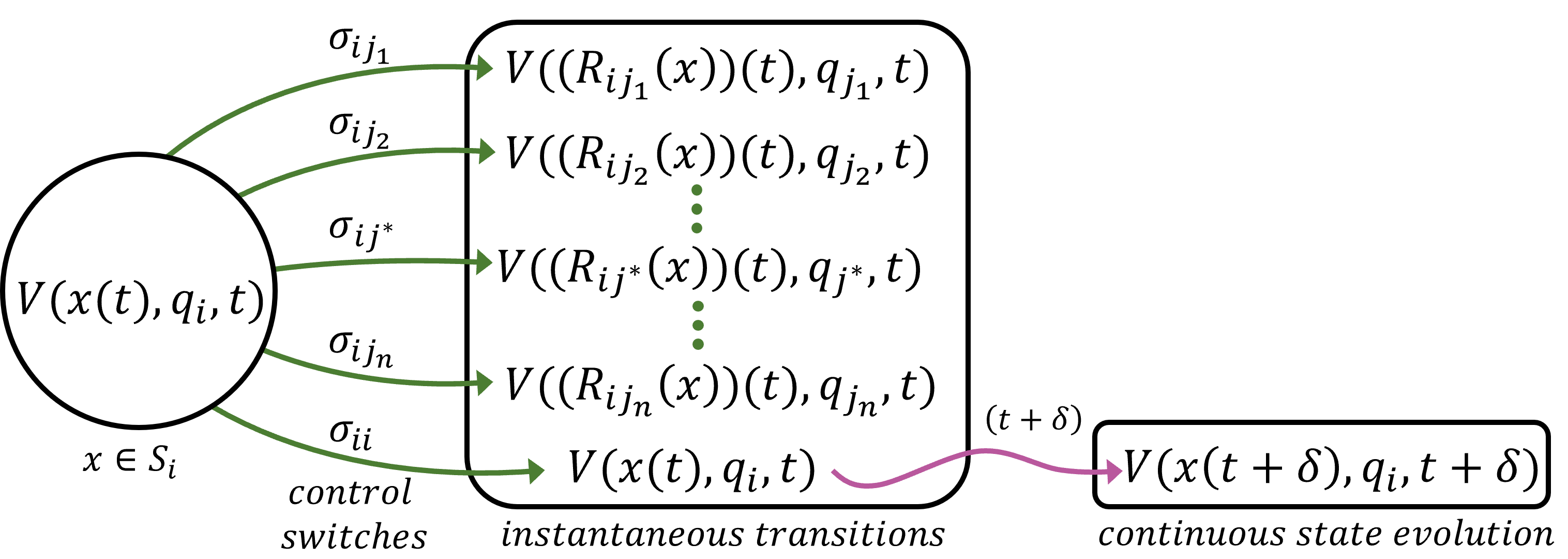}
\caption{The value function is updated by selecting the best among (i) discrete transitions to other modes $q_j$ via control switches or (ii) leveraging a ``dummy'' transition to stay in the current mode and update the value based on the continuous evolution of the system.} \label{fig:opt_decision} 
\vspace{-1em}
\end{center}
\end{figure}

For states outside the operation domain $S_i$ (i.e., in $S_i^C$), the update in~\eqref{eq:hybrid_hji_brt_b} has the same structure, except the option to remain in $q_i$ is removed. The system must take a forced switch $\varsigma_{ik}$, each with its reset map $R_{ik}(x)$, and the value function is simply the minimum over these mandatory transitions. The proof of Theorem~\ref{theorem1} is provided in our preliminary work~\cite{borquez2024hybridreach}.

Once we compute the value function, the optimal discrete and continuous controls at state $(x, q_i)$ at time $t$ are given as:
\begin{align}\label{eqn:opt_control_hybrid}
u^{*}(x, q_i, t)&=\argmin_{u \in U} \max_{d \in D}\nabla V(x, q_i, t) \cdot f_i(x, u, d), \nonumber\\
\sigma_{ij^*}(x, q_i, t) &=\argmin_{\sigma_{ij}}V(R_{ij}(x), q_{j}, t) \text{~if~} x \in S_i,\\
\varsigma_{ik^*}(x, q_i, t) &=\argmin_{\varsigma_{ik}}V(R_{ik}(x), q_{k}, t) \text{~if~} x \in S_i^C.\nonumber
\end{align}
\begin{remark}
    Note that the proposed framework can easily be extended to scenarios where the forced transition mode represent an adversarial or uncertain transition instead. 
    In this case, we can use $\max$ instead of $\min$ over $\varsigma_{ik}$ in Theorem 1 to account for the worst-case behavior.
\end{remark}

\begin{remark}
    This formulation can also be applied to compute the avoid hBRT by replacing the goal set with the failure set and reversing the roles of control and disturbance in Theorem~\ref{theorem1}. Additionally, the discrete switching decision in (\ref{eqn:opt_control_hybrid}) changes from a minimization to a maximization, reflecting the controller's objective to avoid the failure set rather than reach a goal.
\end{remark}

\noindent \textbf{Numerical implementation:} We now present an approximate numerical algorithm that can be used to calculate the value function in Theorem 1. 
It builds upon the value function calculation for the classical HJI-VI in \eqref{eq:hji}, which is solved using currently available level set methods \citep{mitchell2004toolbox}. 
Specifically, the value function is computed over a discretized state-space grid $\hat{x}$, and propagated in time with timestep $\delta$ for each mode $q_i$.
After each propagation step, the value function over $\hat{x}$ is updated for each mode $q_i$ according to the discrete transitions available at each grid point as per~\eqref{eq:hybrid_hji_brt}, this is repeated until the time horizon $T$ is reached.
The detailed procedure is presented in Algorithm \ref{alg1}. 
\begin{algorithm}[t]
  \caption{Value function computation for hybrid dynamical system}\label{alg1} 
 \textbf{Input:} $g(x), T, \delta, \hat{x},H$\\
 \textbf{Output:} $V(\hat{x},q_i,t) \forall i$\\
  \textbf{Initialization:} $V(\hat{x},q_i,T)\gets g(\hat{x}) \forall i$; \, $t\gets T$\\
\While{$(t > 0)$}{
\ForEach{$(q_i \in Q)$}{
\begingroup
\footnotesize
$V(\hat{x},q_i,t-\delta)\gets V(\hat{x},q_i,t) +
    \newline \hspace*{5em}\displaystyle \max_{d} \min_{u}\nabla V(\hat{x}, q_i, t)\cdot f_i(\hat{x}, u, d) \delta$\\
\endgroup
}
\ForEach{$(q_i \in Q)$}{
\uIf{$(\hat{x} \in S_i$)}{
\begingroup
\footnotesize
    $\displaystyle V(\hat{x}, q_i, t-\delta) \gets \min\{ g(\hat{x}), V(\hat{x}, q_i, t-\delta),
    \newline \hspace*{6.8em} \min_{\sigma_{ij}}V(R_{ij}(\hat{x}), q_{j}, t-\delta) \}$
\endgroup
  }
  \uElseIf{$(\hat{x} \in S_i^C$))}{
\begingroup
\footnotesize
     $\displaystyle V(\hat{x}, q_i, t-\delta) \gets 
 \min_{\varsigma_{ik}}V(R_{ik}(\hat{x}), q_{k}, t-\delta) $
\endgroup
  }
}
$t=t-\delta$
}
\end{algorithm}

It is important to stress the remarkable similarity of this new hybrid reachability algorithm to its continuous counterpart in \eqref{eq:hji}. 
There are only two key differences: (1) we now propagate $N$ value functions simultaneously (corresponding to $N$ discrete modes), as opposed to just one.
(2) We ``adjust'' the value function for each discrete mode to account for discrete switches (second for-loop in Algorithm \ref{alg1}).
Despite this simplicity, the algorithm simultaneously reasons about all possible discrete and continuous transitions to optimally steer the system to the goal set.

\textbf{\textit{Complexity analysis}}. This algorithm computes the hBRT over a state-space grid, using a finite time step.  
Assuming there are $M$ grid points per state dimension, the total is $M^{n_x}$ grid points, where $n_x$ is the number of continuous states. 
At each timestep, for each discrete mode, the computation of the value function requires evaluating all potential discrete transitions from the current mode (the inner for loop in Algorithm \ref{alg1}). This results in $\mathcal{O}(N \cdot M^{n_x})$ computations per timestep, per mode. Since there are $N$ discrete modes, the computational requirement per timestep scales to  $\mathcal{O}(N^2 \cdot M^{n_x})$. Finally, the total number of timesteps is proportional to the time horizon $T/\delta$, resulting in a total complexity of $\mathcal{O}(T \cdot N^2 \cdot M^{n_x} / \delta)$. 
In summary, the computational complexity of Algorithm 1 scales linearly with the time horizon, quadratically with the number of discrete modes, and exponentially with the number of continuous state dimensions.

\begin{remark}\label{remark:hbrt_aprox}
Note that Algorithm \ref{alg1} computes an approximate value function. As with all grid-based HJ methods, exact computation would require infinitely fine spatial and temporal discretization. Beyond this overarching limitation, determining exact hybrid reachable sets is, in general, undecidable \citep{lygeros1998controller, henzinger1995s}. Consequently, Algorithm \ref{alg1} may not terminate in finite time for arbitrary hybrid dynamics. Despite these theoretical limitations, the proposed approach has proven highly effective in practice for approximating hBRTs, as demonstrated in several case studies and real-robot experiments in our preliminary work~\cite{borquez2024hybridreach}.
\end{remark}

%% file: 05a_theory_LR.tex
\section{\label{proof_LR}Safety Filtering for Hybrid Systems}

In this section, we address the hybrid safety filtering problem introduced in Subsection~\ref{problem_fltr}, where the goal is to enforce safety in a hybrid system by modifying a nominal control policy only when necessary to prevent eventual entry into the failure set. To this end, we propose and validate a minimally intrusive hybrid safety control filter that wraps around a given nominal hybrid control policy and ensures that the system remains within the safe set throughout its operation.

For a hybrid dynamical system $H$ as described in \eqref{eq:hyb_def}, we consider a general nominal hybrid control policy:  

\vspace{-1em}
{\fontsize{8.8}{10}\selectfont
\begin{align}  
\pi_{nom}(x, q_i, t) &= (u_{nom}(x, q_i, t), \sigma_{nom}(x, q_i, t), \varsigma_{nom}(x, q_i, t)) \notag \\  
&= (u_{nom}(\tilde{x}, \tilde{q}_i, t), \sigma_{ij}, \varsigma_{ik}).  \label{eq:hyb_nom}
\end{align}
}
To render such a nominal policy provably safe, we define a safety-guaranteeing \textbf{Hybrid Least Restrictive Filter (hLRF)}  $\pi_{safe}=\pi_{lr}=(u_{lr},\sigma_{lr},\varsigma_{lr})$ as:
\begingroup
\small
\begin{equation}\label{eq:hyb_lr}
\begin{aligned}
&\text{If}~x \in S_i:\\
&\begin{aligned}
& \sigma_{lr}= \begin{cases}\sigma_{nom} (x, q_i, t)=\sigma_{ij}&\hspace{0em}   V(R_{ij}(x), q_{j})\geq0,\\
\sigma_{i j^*}(x, q_i) &\hspace{0em} V(R_{ij}(x), q_{j})<0.\end{cases}\\
&\text{If}~x \notin S_i:\\
& \varsigma_{lr}= \begin{cases}\varsigma_{nom} (x, q_i, t)=\varsigma_{ik} &\hspace{0em} V(R_{ik}(x), q_{k})\geq0, \\
\varsigma_{i k^*}(x, q_i) &\hspace{0em}V(R_{ik}(x), q_{k})<0.\end{cases}\\
\\
&\text{After the transition}: \\
& u_{lr}= \begin{cases}u_{nom}(\tilde{x}, \tilde{q_i}, t) & V\left(\tilde{x}, \tilde{q_i}\right)>0, \\
u^*\left(\tilde{x}, \tilde{q_i}\right) & V(\tilde{x}, \tilde{q_i})=0.\end{cases}
\end{aligned}
\end{aligned}
\end{equation}
\endgroup

\noindent where we have access to the time-converged safety hBRT\footnote[2]{The filter can be analogously defined for time-dependent safety hBRTs, reach hBRTs, and hBRATs; we focus on the time-converged avoid case as all-time safety is the primary use case of reachability-based filtering.}  
 of the system with its associated value function $V(x,q_i)$ given by \eqref{eq:hybrid_hji_brt}, and a hybrid optimal safety policy $\pi^{*}(x, q_i)=(u^*(x, q_i),\sigma_{i j^*}(x, q_i),\varsigma_{i k^*}(x, q_i))$ given by \eqref{eqn:opt_control_hybrid}.
 
In simpler terms, the hLRF acts like a safety supervisor that constantly monitors the system’s planned actions, both discrete transitions and continuous control inputs, and only steps in if the planned action would lead to a safety violation. For discrete transitions, the filter checks whether the currently planned transition is safe; if so, it allows it. If the transition would push the system into unsafe territory, the filter overrides it with the safest alternative. After the transition, the filter similarly monitors the continuous control input and only adjusts it if the system is on the boundary of safety, ensuring the system stays inside the safe set. This way, the filter minimally disrupts the original plan while guaranteeing safety at all times. This safety guarantees provided by the hLRF are formalized in the following theorem.%

\begin{theorem}\label{theorem_hlrf}
Consider the hybrid dynamical system $H$ as defined in \eqref{eq:hyb_def}, and $V(x, q_i)$ the time-converged value function encoding the avoid hBRT of a failure set $\mathcal{L}$. Then for an arbitrary hybrid nominal policy $\pi_{nom}$ as defined in \eqref{eq:hyb_nom} the use of the hLRF $\pi_{lr}$ as defined in \eqref{eq:hyb_lr} ensures forward invariance of the safe set.
\end{theorem}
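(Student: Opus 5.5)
Under the safety convention the safe set is $\mathcal{S}=\{(x,q_i): V(x,q_i)\geq 0\}$: the avoid value function is the time-converged solution of \eqref{eq:hybrid_hji_brt} with the goal function replaced by the failure function, with control and disturbance roles swapped, and with the discrete minimizations replaced by maximizations, as in the remark following Theorem~\ref{theorem1}. The plan is to show that if the hybrid state starts in $\mathcal{S}$, then every event the hLRF permits keeps it in $\mathcal{S}$. I would split one execution step into the two phases the filter in \eqref{eq:hyb_lr} separates: a discrete phase (controlled or forced switch plus reset) and a continuous phase (flow in the resulting mode). Forward invariance then follows by induction over the hybrid time domain, alternating between the two phases.

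\textbf{Discrete phase.} Suppose $(x,q_i)\in\mathcal{S}$ with $x\in S_i$. If the nominal switch satisfies $V(R_{ij}(x),q_j)\geq 0$, it is kept and the post-transition state is safe by construction. Otherwise the filter applies $\sigma_{ij^*}$, the maximizer of $V(R_{ij}(x),q_j)$ over the available switches (including the dummy $\sigma_{ii}$). The time-converged avoid VI, inside $S_i$, gives
\[
\max\{\max_{\sigma_{ij}} V(R_{ij}(x),q_j)-V(x,q_i),\ \ldots\}=0,
\]
hence $\max_{\sigma_{ij}} V(R_{ij}(x),q_j)\geq V(x,q_i)\geq 0$ (the dummy switch alone already attains $V(x,q_i)$). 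If $x\notin S_i$, the forced-transition equation gives $\max_{\varsigma_{ik}} V(R_{ik}(x),q_k)=V(x,q_i)\geq 0$, so $\varsigma_{ik^*}$ lands in $\mathcal{S}$. Thus the intermediate state $(\tilde{x},\tilde{q}_i)$ always satisfies $V\geq 0$.

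\textbf{Continuous phase.} This follows the classical least-restrictive argument. In the interior, where $V(\tilde x,\tilde q_i)>0$, any input is allowed, and by continuity of $V$ within a mode the trajectory cannot leave $\mathcal{S}$ without first reaching the level set $V=0$. On that boundary the filter applies $u^*$, the argmax of $\min_d \nabla V\cdot f_i$. Since the time-converged value satisfies $\max_u\min_d \nabla V\cdot f_i\geq 0$ wherever it is differentiable (the Hamiltonian term of the VI is $\geq 0$), we get $\dot V\geq 0$ along the flow under any disturbance, so $V$ cannot cross below zero. If the flow exits $S_i$, the forced-transition case above takes over.

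\textbf{Main obstacle.} The delicate step is the boundary argument. $V$ is only a viscosity solution, so it may be nondifferentiable on $V=0$, and the instantaneous switch to $u^*$ at the boundary is a discontinuous feedback. I would avoid pointwise gradients. Instead I would invoke the dynamic programming characterization behind Theorem~\ref{theorem1}: from any state with $V\geq 0$ there exists a nonanticipative hybrid strategy keeping the system out of $\mathcal{L}$ for all time. The hLRF can then be viewed as deferring to this strategy whenever $V$ reaches zero, in the standard sampled-data or limiting sense used for continuous least-restrictive filters. Chaining the two phases over successive events (assuming no Zeno behavior) then yields invariance over $[t,T]$ and, by time convergence, over all time.
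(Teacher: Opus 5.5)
\textbf{Verdict: there is a genuine gap, in the continuous phase.}

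Your two-phase structure matches the paper's proof. Its Step 1 is your four-case discrete analysis, and its Step 2 invokes the continuous-time least-restrictive filter result in the post-transition mode.

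A small correction in your discrete phase: $\max\{\max_{\sigma_{ij}}V(R_{ij}(x),q_j)-V(x,q_i),\ldots\}=0$ gives $\max_{\sigma_{ij}}V(R_{ij}(x),q_j)\le V(x,q_i)$, not $\ge$. The inequality you need holds only because $\sigma_{ii}$ is counted among the candidates. That same fact means the filter's test can never reject a nominal ``stay'' decision on the safe set.

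\textbf{The failing step.} You claim that the Hamiltonian term of the VI is $\ge 0$. That holds for the purely continuous avoid VI. In the hybrid VI, however, the continuous branch sits inside an outer $\max$ with the switching term. So $\max_{u\in U}\min_{d\in D}\nabla V\cdot f_i\ge 0$ is forced only where the switching term is strictly negative. Where $V(x,q_i)$ is attained only by a switch, the Hamiltonian in mode $q_i$ can be negative. Because \eqref{eq:hyb_lr} accepts $\sigma_{ii}$ whenever $V(x,q_i)\ge 0$, including $=0$, the state can sit on $\{V(\cdot,q_i)=0\}$ with no input able to keep it there.

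\textbf{Counterexample.} Take $S_1=S_2=\mathbb{R}$, $f_1\equiv 1$, $f_2\equiv 0$, one controlled switch $\sigma_{12}$ with $R_{12}(x)=x$, and safety margin $l(x)=1-x$ (failure where $l<0$).
\begin{itemize}
\item The avoid value is $V(x,q_1)=V(x,q_2)=1-x$, since from $q_1$ the best option is to switch at once.
\item This $V$ satisfies the VI in $q_1$: the switching term is $0$, while the continuous branch $\min\{l-V,\ \nabla V\cdot f_1\}$ equals $\min\{0,-1\}=-1$.
\item Yet $\nabla V\cdot f_1=-1$ everywhere.
\item Under a nominal policy that always takes $\sigma_{11}$, the test $V(R_{11}(x),q_1)=V(x,q_1)\ge 0$ passes at every safe state, including $x=1$. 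There is no input to modify, and the trajectory leaves $\{V\ge 0\}$.
\end{itemize}
The idle mode under the do-nothing nominal policy in the spacecraft study has this structure.

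\textbf{Why your fallback does not help.} On $\{V=0\}$, the hLRF as written hands only $u$ to the optimal policy; the discrete decision stays with $\pi_{nom}$. ``Deferring to the optimal hybrid strategy'' therefore describes a different filter from the one in the statement.

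\textbf{Relation to the paper and what a repair needs.} The paper's Step 2 rests on the same unchecked premise. It imports the continuous-time result without verifying, mode by mode, that the zero level set of $V(\cdot,\tilde q_i)$ can be held by continuous input alone. A correct argument needs one of two things:
\begin{itemize}
\item an explicit assumption that the continuous branch of the VI is active on the zero level set of every mode; or
\item a modified filter that, when the post-transition value is $0$, also overrides the discrete choice with an optimal switch whose post-transition state has an active continuous branch. Ties such as $\sigma_{11}$ versus $\sigma_{12}$ above must be broken in that direction.
\end{itemize}
With such a condition in place, your boundary argument (in the viscosity or sampled-data sense) and your non-Zeno induction over events become applicable.
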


\textit{Proof:} We aim to show that if the system starts from a safe state $(x, q_i)$ such that $V(x, q_i) \geq 0$, then the next state, after applying the hybrid safety filtering strategy, is also safe.

\vspace{0.5em}
\noindent \textbf{Step 1: Discrete Transition Safety.} We first prove that the discrete transition selected by the filter maintains safety; that is, $V(\tilde{x}, \tilde{q}_i) \geq 0$ holds under all four possible cases.

\begin{itemize}
    \item \textbf{Case a1} ($x \in S_i$, $V(R_{ij}(x), q_j) \geq 0$):  
    The nominal controlled switch $\sigma_{ij}$ leads to a safe state. Since $\tilde{x} = R_{ij}(x)$ and $\tilde{q}_i = q_j$, we have $V(\tilde{x}, \tilde{q}_i) = V(R_{ij}(x), q_j) \geq 0$ directly.

    \item \textbf{Case a2} ($x \in S_i$, $V(R_{ij}(x), q_j) < 0$):  
    The nominal controlled switch leads to an unsafe state, so the filter overrides it with the optimal safe switch $\sigma_{ij^*}$. By definition of the value function, we have:
    \[
    V(x, q_i) = \max_{\sigma_{ij}} V(R_{ij}(x), q_j) = V(R_{ij^*}(x), q_{j^*}).
    \]
    Since we assumed $V(x, q_i) \geq 0$, it follows that $V(\tilde{x}, \tilde{q}_i) = V(R_{ij^*}(x), q_{j^*}) \geq 0$. This case also includes the ``dummy'' control $\sigma_{ii}$, which keeps the system in the same mode if it is safer to remain there.

    \item \textbf{Case b1} ($x \notin S_i$, $V(R_{ik}(x), q_k) \geq 0$):  
    The nominal forced switch $\varsigma_{ik}$ leads to a safe state. Thus, $V(\tilde{x}, \tilde{q}_i) = V(R_{ik}(x), q_k) \geq 0$.

    \item \textbf{Case b2} ($x \notin S_i$, $V(R_{ik}(x), q_k) < 0$):  
    The nominal forced transition is unsafe and is overridden with the optimal forced transition $\varsigma_{ik^*}$. Similarly, we have:
    \[
    V(x, q_i) = \max_{\varsigma_{ik}} V(R_{ik}(x), q_k) = V(R_{ik^*}(x), q_{k^*}).
    \]
    Again, using the assumption that $V(x, q_i) \geq 0$, we conclude that $V(\tilde{x}, \tilde{q}_i) = V(R_{ik^*}(x), q_{k^*}) \geq 0$.
\end{itemize}

\noindent In all cases, the discrete update ensures $V(\tilde{x}, \tilde{q}_i) \geq 0$.

\vspace{0.5em}
\noindent \textbf{Step 2: Continuous Control Safety.}  
After the discrete transition, the system operates in a fixed mode $\tilde{q}_i$ with continuous dynamics. The continuous safety filter $u_{lr}$ applied here corresponds to the \textit{Continuous-Time Least Restrictive Safety Filter}, proven to maintain safety in \cite{borquezFiltering2023}. Since this filter guarantees that trajectories remain within the safe set defined by the BRT, the state remains safe after the application of continuous control, completing the proof. \hfill $\square$

%% file: 05b_cases_LR.tex
\subsection{\label{casesLR}Hybrid Least Restrictive Filtering Case Studies}
To demonstrate the effectiveness of the proposed hLRF approach, we apply it to two distinct hybrid systems. These case studies illustrate how the safety filter guarantees safety while minimally modifying a nominal policy. Each system presents unique challenges, such as forced transitions, controlled transitions, unactuated modes, and timing constraints, highlighting the flexibility of the proposed method in different hybrid control scenarios.

\subsubsection{\label{jump4D_LR}Planar Jumping Robot:}
To illustrate our approach, we consider a simple planar jumping robot with the hybrid dynamics shown in Fig.~\ref{fig:jump4d_diagram}:
\vspace{-1.0em}
\begin{figure}[h!]\centering
     \includegraphics[width=0.65\columnwidth]{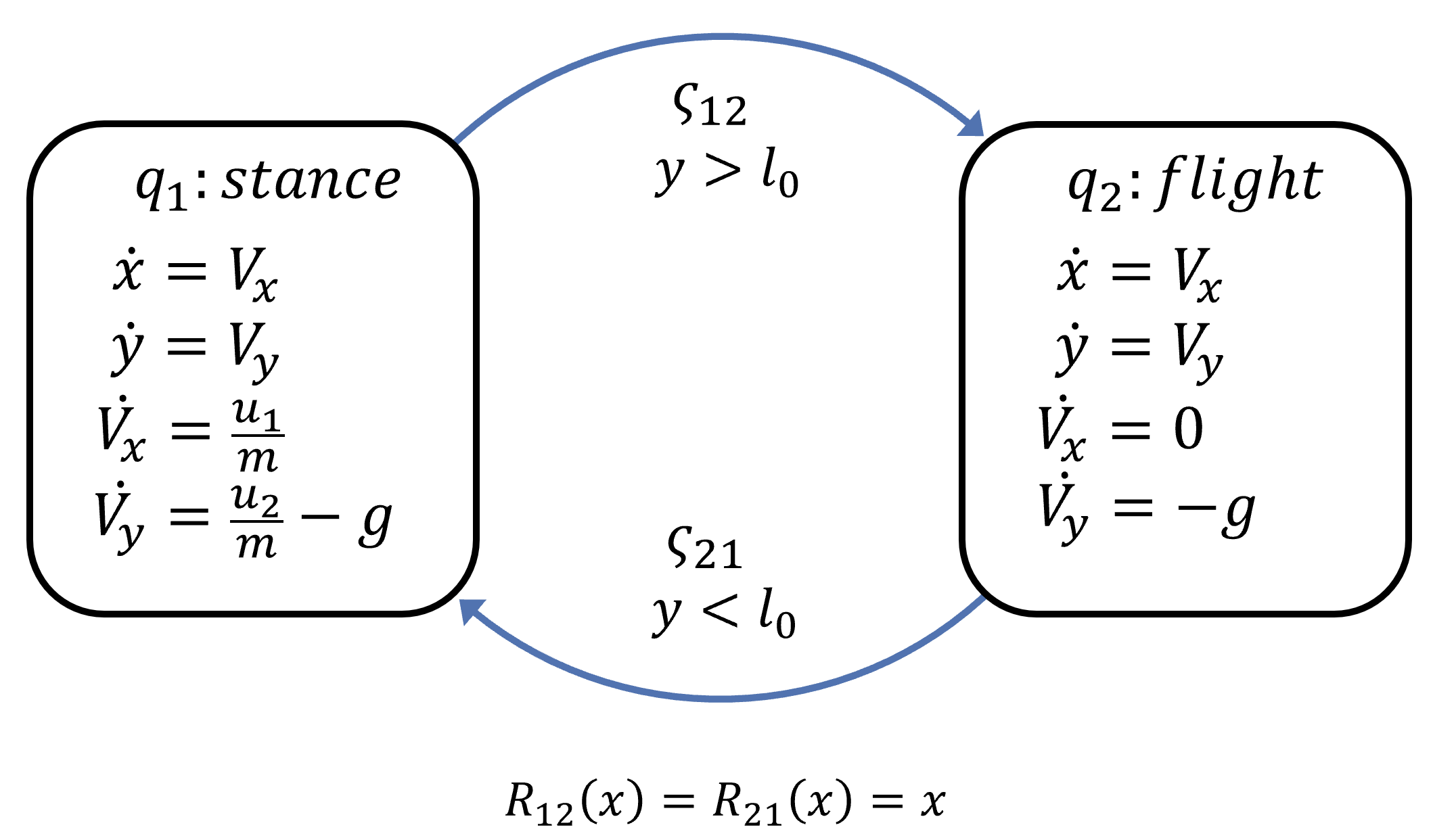}
     \vspace{-0.5em}
     \caption{Hybrid dynamics of the planar jumping robot with two modes: stance ($q_1$), where ground-actuated forces apply, and flight ($q_2$), with ballistic motion. Forced transitions are triggered when the vertical position reaches the leg-length boundary.}
      \label{fig:jump4d_diagram} 
\end{figure}
 
Here, the continuous state $x \in \mathbb{R}^4$ represents the $xy$ positions and velocities of the center of the robot, and $q_i$ with $i \in \{1,2\}$ the discrete modes that model the behavior of the system while standing on the ground and when in ballistic flight after jumping. 
$(u_1,u_2)$ are the control force inputs in each axis with an actuation range of $0-30$~N, this force is applied against the ground so it is only available while in stance mode. The transition is modeled by a forced transition that checks if the robot $y$ distance to the ground is larger or smaller than the robot's leg length $l_0=0.5$~m. We also consider $m=1$~kg the mass of the robot, and $g$ is the acceleration due to gravity. 

We consider the failure set shown in red in Fig.~\ref{fig:jump4d_avoid}, where the obstacles are taller than the robot’s leg length and must be cleared by jumping. To evaluate our safety filter, we simulate two representative nominal control strategies that do not account for obstacle locations or safety constraints.\footnote{We specify only the continuous control component of each policy, as the system involves only forced transitions with a single transition candidate, and the flight mode ($q_2$) is unactuated. Thus, the discrete and flight-mode nominal policies are irrelevant for this example.}

The first policy, $u_{nom1} = (5, g)$, applies horizontal force while compensating for gravity, producing straight-line ground movement. The second, $u_{nom2} = (5, 15)$, applies additional vertical thrust, resulting in low ballistic hops that cannot clear the obstacles.

We render both policies safe by computing the time-converged avoid hBRT of the failure set and using the resulting value function $V(x, q_i)$ to construct the hLRF described in \eqref{eq:hyb_lr}.

\begin{figure}[t] 
\includegraphics[width=0.95\columnwidth]{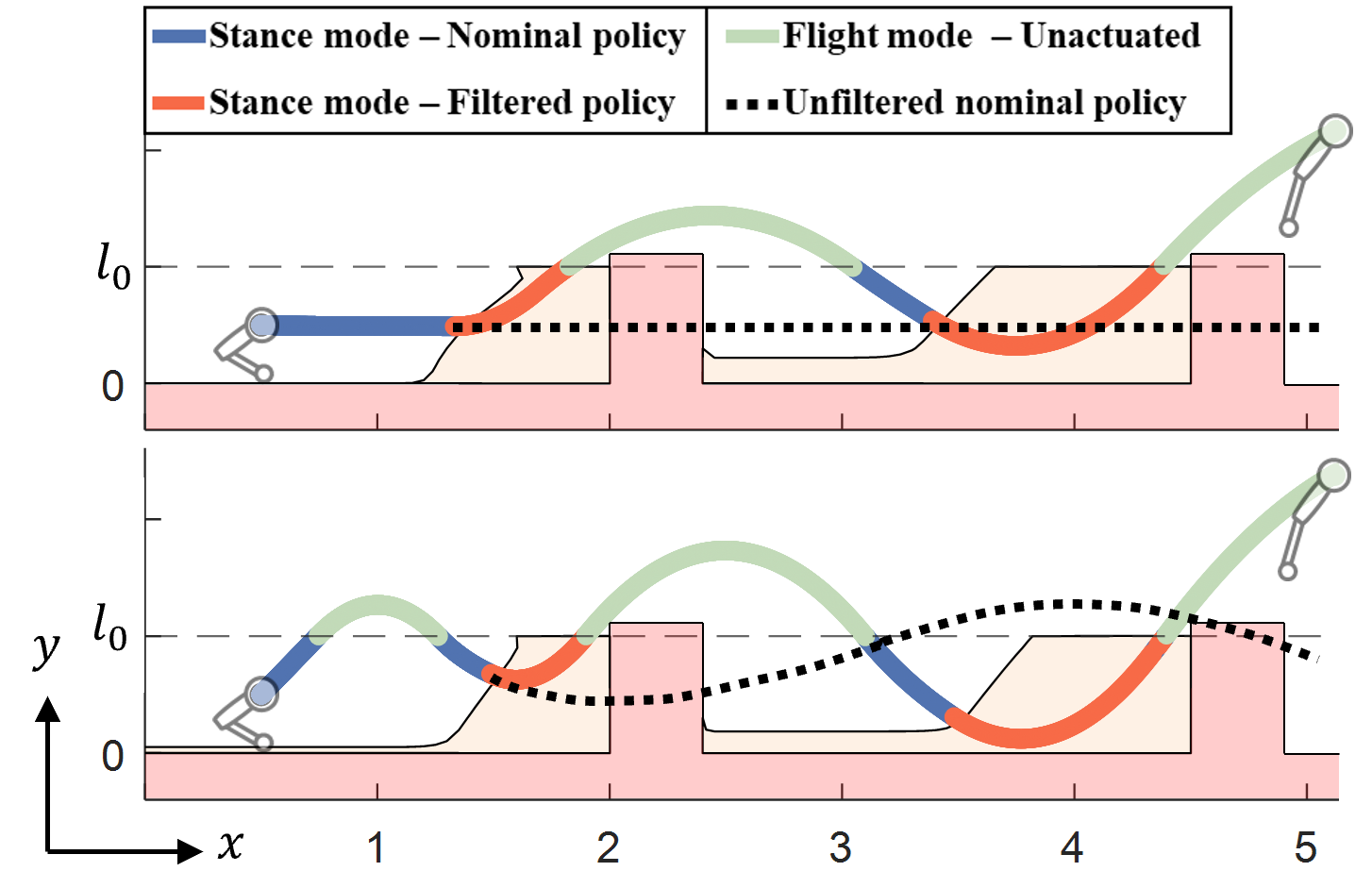}
      \captionof{figure}{hLRF renders nominal policies safe in a jumping robot example. Top plot corresponds to $u_{nom1}$ (horizontal movement), bottom plot shows $u_{nom2}$ (hopping behavior). In light orange, we show slices of the unsafe hBRT at the states that trigger the filter by coming into contact with its boundary. Dashed lines show the unsafe unfiltered trajectories.}
      \label{fig:jump4d_avoid} 
\end{figure}

Filtered trajectories for both nominal policies, starting from rest in the stance mode, are shown in Fig.~\ref{fig:jump4d_avoid}. The required hybrid avoid BRT is computed over a grid with $0.05$~m spatial and $0.2$~m/s velocity resolution. We visualize slices of the hBRT in light orange at the exact states where the hLRF activates, a BRT slice represents a lower-dimensional cross-section of the reachable tube, obtained by fixing certain state variables, in this particular case we show the xy cross section of the tube at the particular velocities of the first state at risk of breaching safety. Trajectories are color-coded: blue indicates nominal stance behavior, orange shows filtered stance behavior, and light green corresponds to unactuated flight states. Dashed black lines show the unsafe, unfiltered trajectories, which would lead the system into failure.

The top plot corresponds to the horizontal nominal policy $\pi_{nom1}$. While in stance mode, the safety filter triggers when a collision is imminent, generating a corrective jump to safely clear the obstacle. These interventions are shown in orange and guide the system into flight mode with safe velocity and position. For the second obstacle, the filter engages earlier due to the incoming negative velocity from the first jump, as reflected in the larger hBRT slice at that contact point.

In the bottom plot, the nominal policy $\pi_{nom2}$ produces low hops that would otherwise fail to clear the obstacle. The hLRF intervenes just in time, adjusting the trajectory to avoid failure. Notably, the ability to query the value function at all states ensures safety is maintained regardless of the nominal behavior, as the filter can always determine the optimal safe action to stay outside the hBRT at its boundary. This generality and reactivity come with minimal computational cost as the hLRF executes in real time, averaging just $0.16$~ms per query on an Intel i7-13620H processor.

\subsubsection{\label{Sat5D_LR}Low orbit spacecraft:}

We further validate the hLRF on a spacecraft model adapted from \cite{Satellite_CBF}, using nonlinear planar Hill's dynamics~\citep{curtis2019orbital}. Here $p$, $r$, $V_p$, $V_r$ denote along-track and radial displacements and velocities relative to a chief satellite, $s$ a timer, and $\alpha$, $\beta$ state-dependent accelerations encoding Coriolis and gravitational effects, with $\mu$ Earth's gravitational parameter, and $\omega$, $R$ the chief's orbital angular velocity and radius. The hybrid dynamics are shown in Fig.~\ref{fig:sat_diag}.

\vspace{-0.5em}
\begin{figure}[h]\centering
\includegraphics[width=1.0\columnwidth]{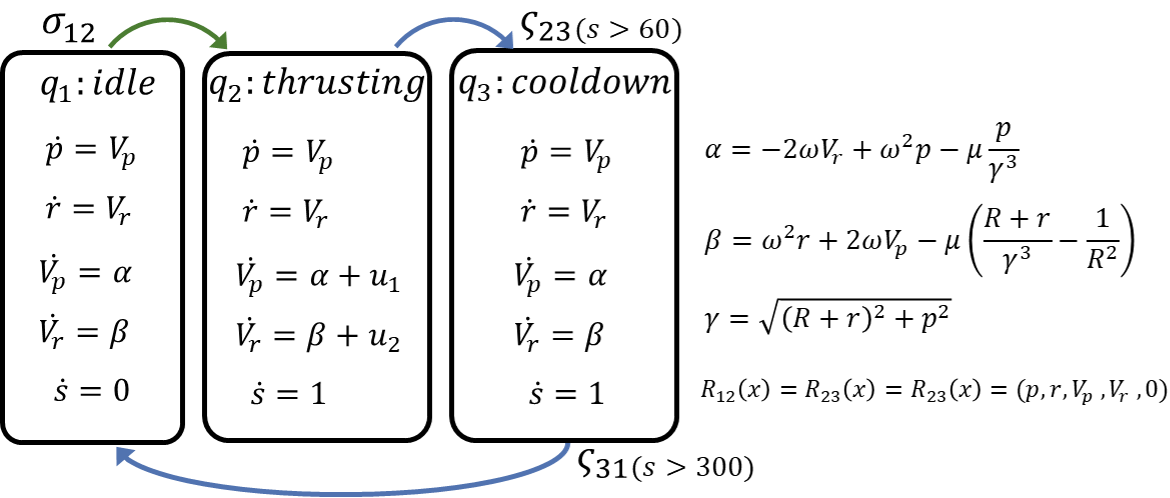}
    \vspace{-2.0em}
      \captionof{figure}{Low orbit spacecraft hybrid system formulation.}
      \label{fig:sat_diag} 
\end{figure}
\vspace{-2em}
\begin{itemize}
    \item \textbf{Mode \( q_1 \) (Idle):} The spacecraft keeps its thrusters off, awaiting a controlled transition to engage propulsion.
    \item \textbf{Mode \( q_2 \) (Thrusting):} The spacecraft uses its thrusters for 60~s. After this period, a forced transition takes the system into cooldown mode.
    \item \textbf{Mode \( q_3 \) (Cooldown):} The spacecraft is unactuated and unable to re-engage its thrusters. After a 300-second cooldown, it transitions back to idle mode.
\end{itemize}

\vspace{-0.5em}
We consider the spacecraft operating near a chief satellite surrounded by obstacles as shown in Fig.~\ref{fig:sat_traj}. The deputy must navigate a cluttered environment following the presented hybrid dynamics and continuous control inputs $u_1$ and $u_2$ bounded by $\pm1$~$\mathrm{m/s^2}$. As nominal hybrid policy we consider a passive ``do-nothing'' strategy that applies no thrust and triggers no discrete transitions.

The proposed hLRF supervises this nominal policy and overrides it when necessary to prevent collisions. Filtered trajectories from two initial conditions appear in Fig.~\ref{fig:sat_traj}. In all cases, the safety filter maintains collision-free motion with real-time performance, averaging $0.19$~ms per query on the previously described hardware.

Color-coded trajectories indicate the active discrete mode: idle (blue), thrusting (orange), and cooldown (gray). Safety is enforced by handling both controlled and forced transitions, respecting timing constraints, and managing continuous controls where propulsion is available during the value-function computation.

\vspace{-0.25em}
\begin{figure}[b]\centering
\includegraphics[width=0.755\columnwidth]{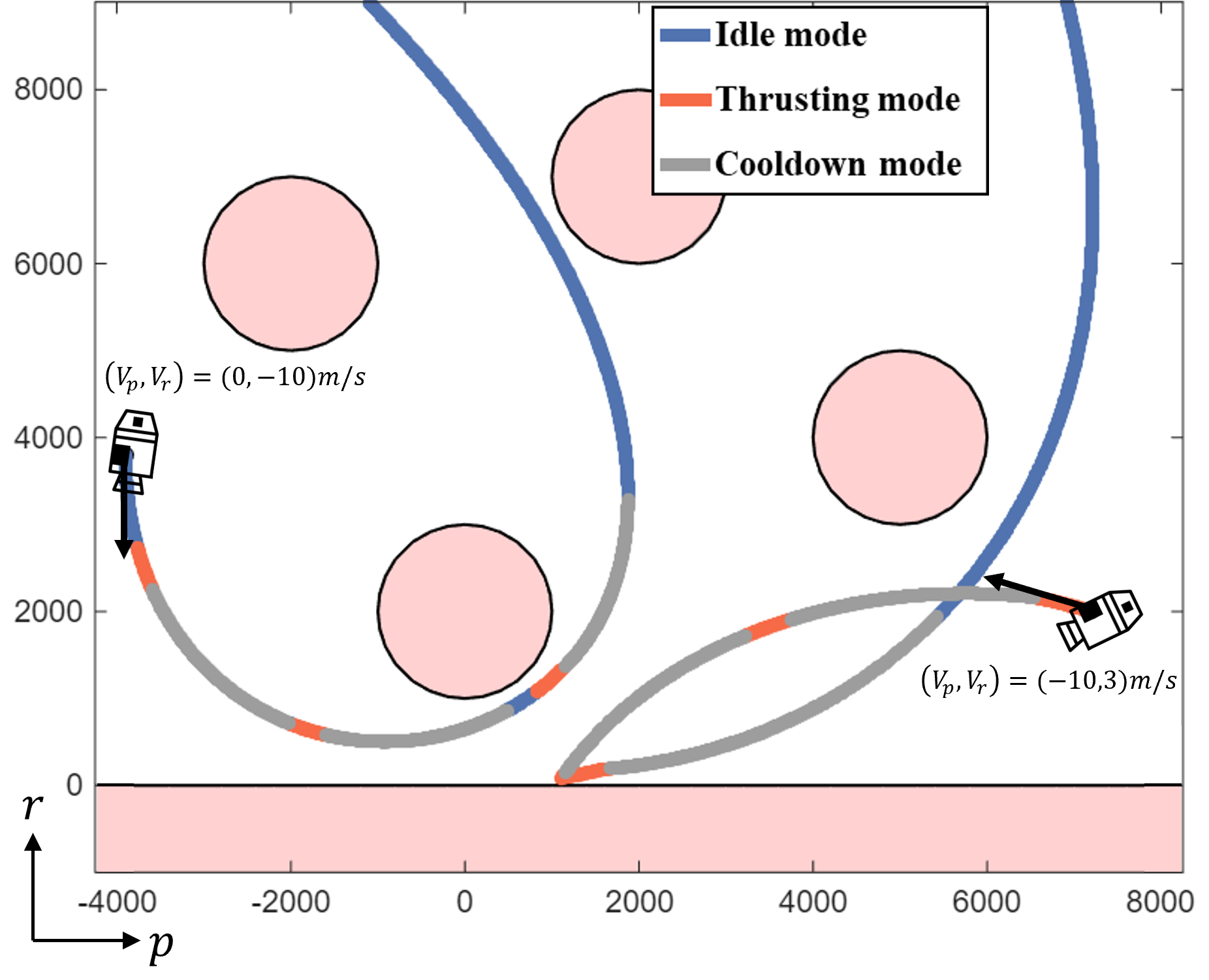}
        \vspace{-1.0em}
      \captionof{figure}{Low orbit spacecraft trajectories where the hLRF filters a nominal do-nothing policy to avoid collisions.}
      \label{fig:sat_traj} 
\end{figure}

%% file: 06a_theory_BRAT.tex
\section{\label{hybrid_BRAT}Reach-Avoid Formulation for Hybrid Systems}
In previous sections, we introduced the hLRF as a mechanism to ensure safety by filtering unsafe actions in real time. While this provides strong safety guarantees, many applications also require task completion guarantees, such as reaching a goal location within specific time constraints. In this section, we address this complementary need by introducing a reach-avoid formulation for hybrid systems that guarantees both safety and task completion.

We present Theorem~\ref{theorem2}, an extension of Theorem~\ref{theorem1}, to tackle the hybrid reach-avoid problem discussed in Subsection~\ref{problem_RA}. The objective is to compute the set of initial states from which the hybrid system can reach a goal set within a given time horizon while avoiding a failure set. While Theorem~\ref{theorem1} provides a framework for computing the hBRT for either reachability or avoidance in isolation, Theorem~\ref{theorem2} generalizes this approach to enforce both objectives simultaneously.

Intuitively, Theorem~\ref{theorem2} mirrors the structure of Theorem~\ref{theorem1}, with the key difference that the continuous evolution term now follows the BRAT variational HJI-VI in~\eqref{eq:hji_vi}. In~\eqref{eq:hybrid_hji_brat_a}, the outer minimization again selects the best discrete decision available from mode \(q_i\): either (i) taking a discrete transition \(\sigma_{ij}\) and applying the corresponding reset map \(R_{ij}(x)\), or (ii) remaining in mode \(q_i\), in which case the value evolves according to~\eqref{eq:hji_vi}. For states in \(S_i^C\), the update in~\eqref{eq:hybrid_hji_brat_b} removes this option, forcing the system to apply a transition \(\varsigma_{ik}\) and take the minimum over all mandatory resets.

\begin{theorem}\label{theorem2}
Consider the hybrid dynamical system $H$ as defined in \eqref{eq:hyb_def}.
Let $g(x)$ be an implicit representation of the goal set $\mathcal{G}$, i.e. $\mathcal{G} = \{x : g(x)\leq 0\}$, and $l(x)$ an implicit representation of the failure set $\mathcal{L}$, i.e. $\mathcal{L} = \{x : l(x)\geq 0\}$.
Also let the value function $V(x, q_i, t)$ be the solution of the following constrained HJI-VI:
\vspace{-0.5em}
\begingroup
\small
\begin{subequations} \label{eq:hybrid_hji_brat}
\begin{align}
&\text{If } x \in S_i, \nonumber \\
&\min {\Big\{} \min_{\sigma_{ij}}V(R_{ij}(x), q_{j}, t)-V(x, q_i, t), \nonumber\\
&\quad\quad\quad \max {\Big[} l(x)-V(x, q_i, t),\label{eq:hybrid_hji_brat_a} \\
&\quad\quad\quad\quad\quad\quad \min{\Big(} g(x)-V(x, q_i, t), \nonumber \\
&D_{t}V(x, q_i, t) +
 \max_{d \in D} \min_{u \in U} \nabla V(x, q_i, t)\cdot f_i(x,u,d) {\Big)}{\Big]}{\Big\}} = 0.  \nonumber
  \\[3pt]
&\text{If } x \notin S_i, \nonumber \\
&\quad\quad \min_{\varsigma_{ik}}\{V(R_{ik}(x), q_{k}, t) - V(x, q_i, t)\} = 0.
\label{eq:hybrid_hji_brat_b}
\end{align}
\end{subequations}
\endgroup
with terminal time condition:\vspace{-0.5em}
\begin{equation}
V(x, q_i, T) = \max(g(x),l(x)). 
\end{equation}
Then, the hBRAT for the hybrid system is given as:\vspace{-0.5em}
\begin{equation}
\mathcal{V}(t)=\{(x, q): V(x, q, t) \leq 0\}.
\end{equation}
\end{theorem}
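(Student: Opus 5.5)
We prove Theorem~\ref{theorem2} with the same dynamic-programming argument that underlies Theorem~\ref{theorem1}, now applied to a hybrid reach-avoid cost instead of a pure reach cost.

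\textbf{Step 1: the cost and value.} For a hybrid trajectory $\zeta=\zeta^{u,d,\sigma,\varsigma}_{x,q_i,t}$ define
\[
J=\min_{s\in[t,T]}\max\Big\{g(\zeta(s)),\ \max_{r\in[t,s]} l(\zeta(r))\Big\},
\]
where $g$ and $l$ are evaluated on the continuous component of $\zeta$. The value $V(x,q_i,t)$ is the lower value of the game: the disturbance uses nonanticipative strategies in $\Gamma(t)$, and the control chooses $u(\cdot)$, $\sigma(\cdot)$ and $\varsigma(\cdot)$. With this definition, $J\le 0$ holds exactly when some $s$ has $\zeta(s)\in\mathcal{G}$ and no $r\le s$ has $\zeta(r)\in\mathcal{L}$. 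Taking the inf--sup (up to the usual attainment issues for closed sets) gives $\mathcal{V}(t)=hBRAT(t)$. The terminal condition $V(x,q_i,T)=\max(g(x),l(x))$ follows immediately by setting $s=t=T$.

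\textbf{Step 2: the hybrid dynamic programming principle.} For a small horizon $\delta$, $V(x,q_i,t)$ equals the best of two options, each of which adds only a point evaluation or a continuous-flow term.

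\emph{Instantaneous switch.} Since transitions are instantaneous, $l$ and $g$ along the trajectory are inherited from the post-reset state. This gives $V(x,q_i,t)\le V(R_{ij}(x),q_j,t)$ for $x\in S_i$, and $V(x,q_i,t)=\min_k V(R_{ik}(x),q_k,t)$ for $x\notin S_i$, because there flow is not allowed.

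\emph{Flow.} Remaining in $q_i$ on $[t,t+\delta]$ gives the standard reach-avoid DPP:
\[
V(x,q_i,t)=\sup_\gamma\inf_u \min\Big\{\min_{s\in[t,t+\delta]}\max\big(g(\zeta(s)),\max_{r\le s}l(\zeta(r))\big),\ \max\big(\max_{r\le t+\delta}l(\zeta(r)),\,V(\zeta(t+\delta),q_i,t+\delta)\big)\Big\}.
\]

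\textbf{Step 3: viscosity sub- and supersolution checks.} Restricted to the flow branch, the expression in Step 2 is exactly the continuous reach-avoid DPP. Therefore the argument of \cite{Fisac15,herbert2020safe} shows that $V$ satisfies the inner term $\max[\,l-V,\ \min(g-V,\ D_tV+\mathcal{H})\,]=0$ in the viscosity sense wherever switching is not strictly optimal. Combining with the switch branch:
\begin{itemize}
\item[(a)] Everywhere, $\min_j V(R_{ij}(x),q_j,t)-V\ge 0$ and the continuous term is $\ge 0$.
\item[(b)] At each point, one of the two terms vanishes: the switch term if an immediate switch attains the infimum, and the continuous term otherwise.
\end{itemize}
This yields the outer $\min\{\cdot,\cdot\}=0$ in \eqref{eq:hybrid_hji_brat_a}. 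Equation \eqref{eq:hybrid_hji_brat_b} is the identity from the forced branch.

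\textbf{Main obstacle.} The delicate step is justifying that switching commutes with the avoid constraint. The running max of $l$ must not be lost across a reset, and the control must not be able to exploit Zeno sequences of instantaneous switches. We would handle this by assuming finitely many switches on finite horizons, or at least that any switching cycle cannot strictly lower $V$. We would then argue by induction on the number of switches that $V$ is the fixed point of the switch-then-flow operator, which is what Algorithm~\ref{alg1} iterates (with the $l$-max inserted). Uniqueness of the viscosity solution of the resulting quasi-variational inequality would be assumed, following \cite{borquez2024hybridreach}, rather than reproved.
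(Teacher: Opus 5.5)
Your proposal is correct at the paper's level of rigor and follows essentially the same route: it uses the same reach-avoid cost $J$ and splits the value into two branches. The immediate-switch branch gives $\min_{\sigma_{ij}}V(R_{ij}(x),q_j,t)$ on $S_i$ and the forced-switch identity off $S_i$; the flow branch is governed by the continuous reach-avoid dynamic programming principle; and the two are combined by a pointwise minimum. The differences are presentational: you check the flow term through viscosity sub/supersolution conditions where the paper uses a formal first-order Taylor expansion (and defers rigor to the viscosity literature), and you state explicitly the no-Zeno and uniqueness assumptions that the paper leaves implicit.
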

%

\textit{Proof:} We start with an analogous formulation to HJ reachability with a goal function $g(x)$ such that the goal set is defined as its sub-zero level set, $\mathcal{G} = \{(x) : g(x)\leq 0\}$, as well as a failure function $l(x)$ such that the failure set is defined as its super-zero level set, $\mathcal{L} = \{(x) : l(x)\geq 0\}$. 
The hBRAT seeks to find all states that could enter $\mathcal{G}$ at any point within the time horizon without ever getting into $\mathcal{L}$.

To simplify notation we redefine the trajectory notation introduced in subsection \ref{problem_brt} as $\zeta(s) := \zeta_{x,q_i, t}^{u, d, \sigma,\varsigma}(s)$. 
The cost is defined as the minimum distance to the goal set over time after taking the maximum between this distance and the maximum clearance to the failure set.
\vspace{-0.5em}
\begingroup
\small
\begin{multline} 
J(x, q_i, u(\cdot), d(\cdot), \sigma(\cdot),\varsigma(\cdot), t)=\\\min_{s \in[t, T]} \max \{g(\zeta(s)), \max _{r \in[t, s]} l(\zeta(r))\}.
\end{multline}
\endgroup

\noindent \textbf{Scenario A (Interior of $S_i$):} We begin by analyzing the scenario where the state lies in the interior of the domain $S_i$. In this setting, continuous evolution and controlled discrete transitions are admissible.

The value function is defined as the cost under optimal continuous and discrete controls that minimize it, and optimal disturbances that maximize it, the value is given as:
\begingroup
\small
\begin{equation}\label{eq:hyb_value_def}
V(x, q_i, t) = \max_{d \in D} \min_{u \in U} \min_{\sigma \in \Sigma}J(x, q_i, u(\cdot), d(\cdot), \sigma(\cdot), t).
\end{equation}
\endgroup
With terminal time condition given by:
\vspace{-0.2em}
\begingroup
\small
\begin{equation}
V(x, q_i, T) = \max [g(x),l(x)]. 
\end{equation}
\endgroup

The discrete transitions considered by the system are shown in green arrows and the continuous control evolution is represented by purple trajectories in Fig.~\ref{fig:opt_decision}. We once again consider a ``dummy'' discrete transition, $\sigma_{ii}$, which allows the system to remain in its current discrete mode. As defined in \eqref{eq:hyb_value_def}, the overall value function is given by the minimum cost over all possible trajectories. Based on this, we proceed by analyzing two distinct cases:

\noindent \textbf{Case A1:} If the system switches to a new discrete mode $j^*$, the continuous state immediately transitions to $R_{ij^*}(x)$. 
In this case, the optimal cost incurred by the system from the new state $(R_{ij^*}(x), q_{j^*})$ is, by definition, given as $V(R_{ij^*}(x), q_{j^*}, t)$. 
Thus, the optimal cost across all possible discrete transitions is given as:
\vspace{-0.2em}
\begingroup
\small
\begin{equation}
V(x, q_i, t) =\min_{\sigma_{ij}}V(R_{ij}(x), q_{j}, t).
\end{equation}
\endgroup

\noindent \textbf{Case A2:} The system evolves in the same mode $q_i$, i.e., it takes the dummy switch $\sigma_{ii}$. 
In this case, for a small time step $\delta$ and using the dynamic programming principle for the cost function in \eqref{eq:hyb_value_def} we can write the value function as\footnote[3]{Algebraic steps are omitted for brevity's sake; for a step-by-step derivation for the non-hybrid case, we refer the reader to Chapter 2 in \cite{herbert2020safe}.}:
%
\definecolor{color1}{RGB}{200,50,50}
\definecolor{color2}{RGB}{40,180,40}
\definecolor{color3}{RGB}{50,50,200}
\begingroup
\small
\begin{align} \label{eqn:cont_flow}
&V(x, q_i, t) = \\
&\max_{d \in D} \min_{u \in U}  \textcolor{color1}{\min\Big(} \min_{s \in[t, t+\delta]} \textcolor{color3}{\max\Big(} g(\zeta(s)),  \max_{r \in[t, s]} l(\zeta(r)) \textcolor{color3}{\Big)}, \notag\\
 & \textcolor{color3}{\max\Big(} V(x(t+\delta), q_i, t+\delta), \max_{r \in[t, t+\delta]} l(\zeta(r)) 
 \textcolor{color3}{\Big)} \textcolor{color1}{\Big )}.\notag
\end{align}
\endgroup
Considering that the time step $\delta$ is infinitely small and approximating the value function at the next time step with a first order Taylor expansion:
\begingroup
\small
\begin{align}
&V(x, q_i, t) = 
\max_{d \in D} \min_{u \in U} \textcolor{color1}{\min\Big(} \textcolor{color3}{\max\Big(}g(\zeta(t)),l(\zeta(t))\textcolor{color3}{\Big)} , \\
& \textcolor{color3}{\max\Big(} V(x, q_i, t) \hspace{-0.2em}+\hspace{-0.3em} D_{t}V(x, q_i, t)\delta \hspace{-0.2em}+\hspace{-0.3em}\nabla V(x, q_i, t)\cdot\delta x , l(\zeta(t))\textcolor{color3}{\Big)}\textcolor{color1}{\Big)}. \notag
\end{align}
\endgroup
Reorganizing the min and max terms, and noting that a trajectory evaluated at time $t$ reduces to the state at that time, we write $\zeta(t) = x(t) := x$, omitting explicit time dependence of the state for notational simplicity:

\begingroup
\small
\begin{align}
V(x, q_i, t) =& 
\max_{d \in D} \min_{u \in U} \textcolor{color3}{\max\Big(} l(x),\textcolor{color1}{\min\Big(}g(x),\\
&V(x, q_i, t) +  D_{t}V(x, q_i, t)\delta +\nabla V(x, q_i, t)\cdot\delta x \textcolor{color1}{\Big)}\textcolor{color3}{\Big)}.\notag
\end{align}
\endgroup

The change in the state $\delta x$ can be approximated as $f_i(x,u,d)\delta$, and since only this term depends on the control and disturbance, the value function becomes:
\begingroup
\small
\begin{align}
V(x, q_i,& t) = 
\textcolor{color3}{\max\Big(} l(x),\textcolor{color1}{\min\Big(}g(x),V(x, q_i, t) + ...\\
& D_{t}V(x, q_i, t)\delta + \max_{d \in D} \min_{u \in U}\nabla V(x, q_i, t)\cdot  f_i(x,u,d) \delta \textcolor{color1}{\Big)}\textcolor{color3}{\Big)}.\notag
\end{align}
\endgroup

\noindent \textbf{Combining Cases:} The value function is obtained by taking the pointwise minimum over the two cases, reflecting the optimal decision at each state.
\begingroup
\small
\begin{align}
&V(x, q_i, t) = \textcolor{color2}{\min\Big(} \min_{\sigma_{ij}}V(R_{ij}(x), q_{j}, t),\\
&\textcolor{color3}{\max \Big(} l(x),\textcolor{color1}{\min\Big(}g(x), V(x, q_i, t) \hspace{-0.2em}+\hspace{-0.3em}  D_{t}V(x, q_i, t)\delta \hspace{-0.2em}+\hspace{-0.3em}...\notag\\
&\max_{d \in D} \min_{u \in U}\nabla V(x, q_i, t)\cdot  f_i(x,u,d) \delta \textcolor{color1}{\Big)}\textcolor{color3}{\Big)}\textcolor{color2}{\Big)}.\notag
\end{align}
\endgroup
Subtracting the value function $V(x, q_i, t)$ from both sides:
\begingroup
\small
\begin{align}
0 = &\textcolor{color2}{\min\Big(} \min_{\sigma_{ij}}V(R_{ij}(x), q_{j}, t)-V(x, q_i, t),\\
&\textcolor{color3}{ \max \Big(} l(x)-V(x, q_i, t),\textcolor{color1}{\min\Big(}g(x)-V(x, q_i, t),\notag\\
&D_{t}V(x, q_i, t)\delta  + \max_{d \in D} \min_{u \in U}\nabla V(x, q_i, t)\cdot  f_i(x,u,d) \delta \textcolor{color1}{\Big)}\textcolor{color3}{\Big)}\textcolor{color2}{\Big)}.\notag
\end{align}
\endgroup
Since the above statement holds for all $\delta>0$, we must have: 
\begingroup
\small
\begin{align}
0 = &\textcolor{color2}{\min\Big(} \min_{\sigma_{ij}}V(R_{ij}(x), q_{j}, t)-V(x, q_i, t),\\
&\textcolor{color3}{ \max \Big(} l(x)-V(x, q_i, t),\textcolor{color1}{\min\Big(}g(x)-V(x, q_i, t),\notag\\
&D_{t}V(x, q_i, t)  + \max_{d} \min_{u} \nabla V(x, q_i, t) f_i(x,u,d) \textcolor{color1}{\Big)}\textcolor{color3}{\Big)}\textcolor{color2}{\Big)}.\notag
\end{align}
\endgroup

Note that the procedure presented to derive the HJI-VI for \textit{continuous evolution} of the value function is an informal proof based on the Taylor series expansion that assumes the value function to be differentiable, which may not be true, and only the existence of a viscosity solution can be ensured \citep{mitchell2005time, lygeros2004reachability}.
Nevertheless, this HJI-VI is known to hold even when the value function is non-differentiable, formal proof can be found in \cite{lygeros2004reachability}.
We omitted a detailed derivation for brevity purposes.

\vspace{0.4em}
\noindent \textbf{Scenario B (Outside $S_i$):} For states outside the operational domain of mode $q_i$ (i.e., $x \in S_i^C$), the system must take a forced discrete transition, selecting from the available switches $\varsigma_{ik}$. The structure of the value function update mirrors case A1, as there is no equivalent to case A2 as in this scenario the system is not allowed to stay in $q_i$. The value function is then given by:
\vspace{-0.4em}
\begin{equation}
V(x, q_i, t) =\min_{\varsigma_{ik}}V(R_{ik}(x), q_{k}, t).
\end{equation}

\noindent completing the proof. \hfill $\square$
\vspace{1em}

After computing the value function associated with the hBRAT, the optimal continuous and discrete controls at state $(x, q_i)$ and time $t$ are obtained using the same expressions as in the hBRT case:

\begin{align}\label{eqn:opt_control_hybrid_brat}
u^{*}(x, q_i, t)&=\argmin_{u \in U} \max_{d \in D}\nabla V(x, q_i, t) \cdot f_i(x, u, d), \nonumber\\
\sigma_{ij^*}(x, q_i, t) &=\argmin_{\sigma_{ij}}V(R_{ij}(x), q_{j}, t) \text{~if~} x \in S_i,\\
\varsigma_{ik^*}(x, q_i, t) &=\argmin_{\varsigma_{ik}}V(R_{ik}(x), q_{k}, t) \text{~if~} x \in S_i^C.\nonumber
\end{align}


\noindent \textbf{Numerical implementation:} We present an approximate numerical algorithm to compute the value function associated with the hBRAT in Theorem~\ref{theorem2}. It closely parallels the hBRT computation described in Algorithm~\ref{alg1}. As before, the value function is propagated backward in time over a discretized grid $\hat{x}$ using a small timestep $\delta$. At each step, it is updated according to the continuous dynamics and then adjusted to account for discrete transitions across all modes.

The key differences are twofold: (1) the terminal condition is initialized differently, using a pointwise maximum of the goal and failure functions to encode that the system must reach the goal set while avoiding the failure set. (2) The update step now follows the structure of \eqref{eq:hybrid_hji_brat}. Which for states within the operation domain involves evaluating a more complex structure combining goal satisfaction, failure avoidance, continuous evolution, and possible discrete transitions. For clarity, we separate this into two steps in Algorithm~\ref{alg2}, though conceptually, it mirrors the single-step update used for the hBRT in Algorithm~\ref{alg1}.

\begin{algorithm}[b]
  \caption{Value function computation for hybrid Reach-Avoid problem}\label{alg2} 
 \textbf{Input:} $l(x),g(x), T, \delta, \hat{x},H$\\
 \textbf{Output:} $V(\hat{x},q_i,t) \forall i$\\
  \textbf{Initialization:}
  \begingroup
    \footnotesize
  $V(\hat{x}, q_i, T) \gets \max [l(\hat{x}),g(\hat{x})]  \forall i$; \,  $t\gets T$\\
    \endgroup  
\While{$(t > 0)$}{
\ForEach{$(q_i \in Q)$}{
\begingroup
\footnotesize 
$V(\hat{x},q_i,t-\delta)\gets V(\hat{x},q_i,t) + \newline \hspace*{7em}
    \displaystyle \max_{d} \min_{u}\nabla V(\hat{x}, q_i, t)\cdot f_i(\hat{x}, u, d) \delta$
\endgroup
}
\ForEach{$(q_i \in Q)$}{
\uIf{$(\hat{x} \in S_i$)}{
    \begingroup
    \footnotesize 
    $V(\hat{x},q_i,t-\delta)\gets \max\{l(\hat{x}),\newline \hspace*{8.05em}\min\{ g(\hat{x}), V(\hat{x}, q_i, t-\delta)\}\}$\\
    $\displaystyle V(\hat{x}, q_i, t-\delta) \gets\min\{V(\hat{x}, q_i, t-\delta),\newline \hspace*{9.5em}\min_{\sigma_{ij}}V(R_{ij}(\hat{x}), q_{j}, t-\delta)\}$
    \endgroup
  }
  \uElseIf{$(\hat{x} \in S_i^C$)}{
    \begingroup
    \footnotesize 
    $\displaystyle V(\hat{x}, q_i, t-\delta) \gets  \min_{\varsigma_{ik}}V(R_{ik}(\hat{x}), q_{k}, t-\delta) $
    \endgroup
  }
}
\begingroup
\footnotesize 
$t\gets t-\delta$
\endgroup
}
\end{algorithm}

\textit{\textbf{Complexity analysis:}}
The computational cost of Algorithm~\ref{alg2} matches that of the hBRT algorithm (Algorithm~\ref{alg1}). Although the hBRAT update inside the operational domain has a slightly richer structure, the algorithm still performs the same per–grid point operations and iterates over all modes and states at each time step. Thus, the scaling remains unchanged, and the overall complexity is 
$\mathcal{O}(T \cdot N^2 \cdot M^{n_x} / \delta)$, 
with $N$ discrete modes, $M$ grid points per continuous dimension, $n_x$ the number of continuous states, time horizon $T$, and discretization step $\delta$.

\begin{remark}
Limitations discussed in Remark~\ref{remark:hbrt_aprox} also apply to the hBRAT computation presented here. As with Algorithm~\ref{alg1}, Algorithm~\ref{alg2} relies on discretized approximations and may not yield exact results without infinite resolution.
\end{remark}

%% file: 06b_cases_BRAT.tex
\subsection{\label{casesRA}Hybrid Reach-Avoid Case Studies}

We validate the proposed hBRAT framework through two hybrid system case studies designed to test its ability to synthesize guaranteed goal-reaching behavior under safety constraints. These examples showcase how the approach ensures that the system reaches a specified goal set within a finite time horizon while avoiding entry into defined failure regions. Each scenario introduces distinct hybrid control challenges—including mode transitions, unactuated dynamics, and dwell-time constraints—demonstrating the versatility and robustness of the hybrid reach-avoid formulation across different system configurations.

\subsubsection{\label{jump4dRA}Planar Jumping Robot}

To illustrate our hybrid reach avoid approach, we consider the planar jumping robot example introduced in subsection~\ref{jump4D_LR}. We will use the hybrid dynamical system formulation shown in Fig.~\ref{fig:jump4d_diagram} and the same simulation parameters previously discussed.

In this case, we add a yellow circular goal area after the obstacles shown in red, as depicted in Fig.~\ref{fig:jump4d_RA}.
The hBRAT for this example corresponds to all the combinations of continuous states and discrete modes $(x,q_i)$ from which the robot can reach the goal set within a $2$ second time horizon without colliding with the failure set.

A trajectory following the optimal reach-avoid policy is shown in Fig.~\ref{fig:jump4d_RA}. As we are considering an initial state contained within the hBRAT, we are guaranteed to safely reach the goal set within the time horizon by following the optimal control policy supported by the calculated value function. The color-coded trajectory shows the stance states in blue and the unactuated flight states in light green. Slices of the hBRAT before and after the transitions from stance to flight modes are shown in blue and green correspondingly. As with the safety filter, querying the value function for policy execution is efficient—averaging just $0.16$~ms per query on an Intel i7-13620H processor—enabling real-time application of the reach-avoid policy.

The results in Fig.~\ref{fig:jump4d_RA} confirm that the robot successfully reaches the goal without colliding with the failure set within the time horizon ($1.8$~s for this example), as guaranteed by the hybrid reach-avoid framework. Notably, the plot highlights the transition between the hBRAT slices corresponding to the stance and flight modes, which come in contact precisely at the boundary of the forced jump transition. This alignment indicates that the reach-avoid set for both discrete modes is consistent across transitions, ensuring a seamless and provably safe mode-switching behavior throughout the trajectory.

\begin{figure}[b] 
\includegraphics[width=0.95\columnwidth]{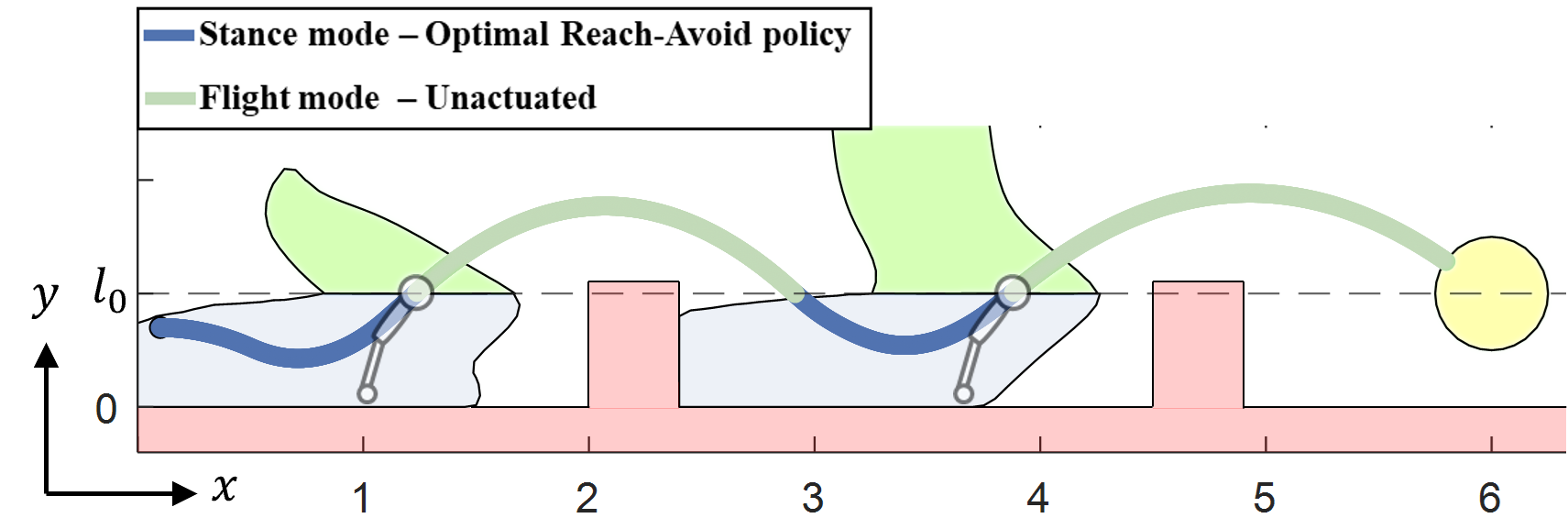}
      \captionof{figure}{Hybrid reach-avoid trajectory for the planar jumping robot example. The robot safely reaches the goal (yellow) without contacting obstacles (red), as ensured by the computed hBRAT. The hBRAT slices for stance and flight modes align at the forced jump transition, ensuring seamless reach-avoid guarantees even in the presence of mode switching.}
      \label{fig:jump4d_RA} 
\end{figure}


\subsubsection{\label{Sat5D_RA}Low Earth Orbit Spacecraft}

To further evaluate the hBRAT approach, we apply it to the low Earth orbit spacecraft example introduced in subsection~\ref{Sat5D_LR}. The system follows the same hybrid dynamical formulation shown in Fig.~\ref{fig:sat_diag}, with simulation parameters consistent with the safety filtering case study, with the addition of a reach time horizon $T=780$~s.

In this scenario, instead of just avoiding obstacles, the goal is to also guide the spacecraft toward a successful docking with the chief satellite located at the origin. To achieve this, we use the proposed approach to compute the hBRAT, which characterizes the set of initial states and modes from which the spacecraft can safely reach the docking point within a finite time horizon while avoiding obstacles.

Figure~\ref{fig:sat_traj_RA} shows optimal reach-avoid trajectories for two idle initial conditions. Corresponding hBRAT slices are overlaid for each case, highlighting the regions from which the spacecraft can safely reach the docking point within the time horizon. First, we note how the hBRAT aligns with intuitive expectations. In the left plot, where the spacecraft starts with low velocity, the hBRAT slice is compact and situated close to the goal, with almost no clearance around obstacles in the direction of motion. In contrast, the right plot considers a high radial velocity initial condition, leading to an elongated hBRAT slice along this direction, reflecting the greater distance the spacecraft can cover per unit of time. However, this also results in larger obstacle margins in the same direction, as the actuation is insufficient to avoid collisions if it starts too close.

Building on these observations, the trajectories in Fig.~\ref{fig:sat_traj_RA} demonstrate how the reach-avoid controller successfully guides the spacecraft to the docking point while adhering to the hBRAT constraints. In the left plot, where the spacecraft starts with low velocity, the optimal policy executes a gradual approach, smoothly adjusting the trajectory to avoid obstacles while maintaining a controlled descent toward the docking point. In contrast, the right plot, where the spacecraft begins with high radial velocity, showcases a more aggresive trajectory. The controller strategically utilizes the initial momentum to cover more ground efficiently before engaging in a single thrust mode transition to direct itself toward the docking area safely. These trajectories highlight the reach-avoid controller's ability to handle a variety of initial conditions optimally even for nonlinear hybrid dynamics.

\begin{figure}[b]\centering
\includegraphics[width=0.99\columnwidth]{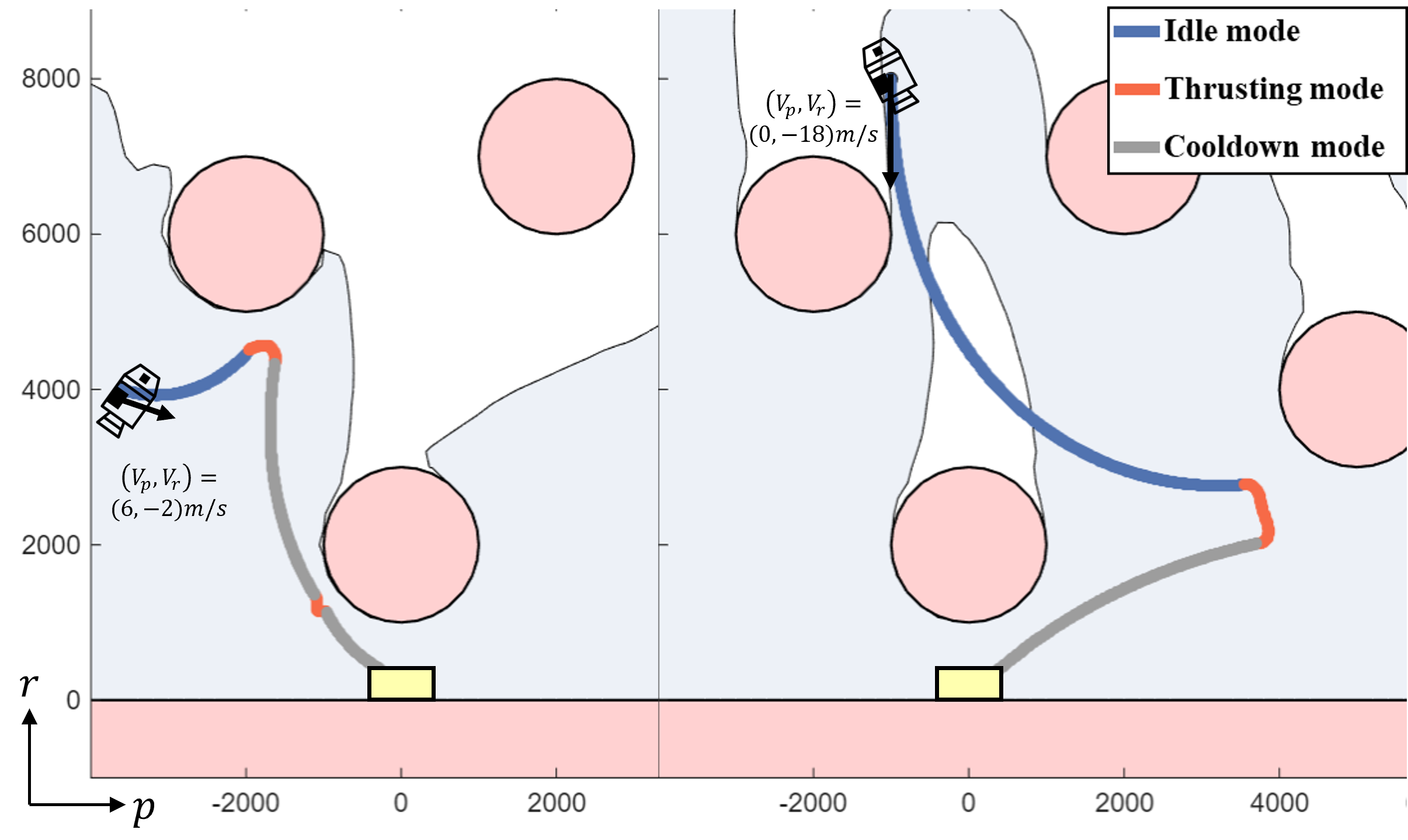}
      \captionof{figure}{Reach-avoid trajectories for the low Earth orbit spacecraft. The spacecraft successfully reaches the goal set while avoiding obstacles, as ensured by the computed hBRAT. The hBRAT slices for each initial condition is shown in light blue.}
      \label{fig:sat_traj_RA} 
\end{figure}

%% file: 10_hw_exp.tex
\begin{figure*}[t]
\centering
\includegraphics[width=\textwidth]{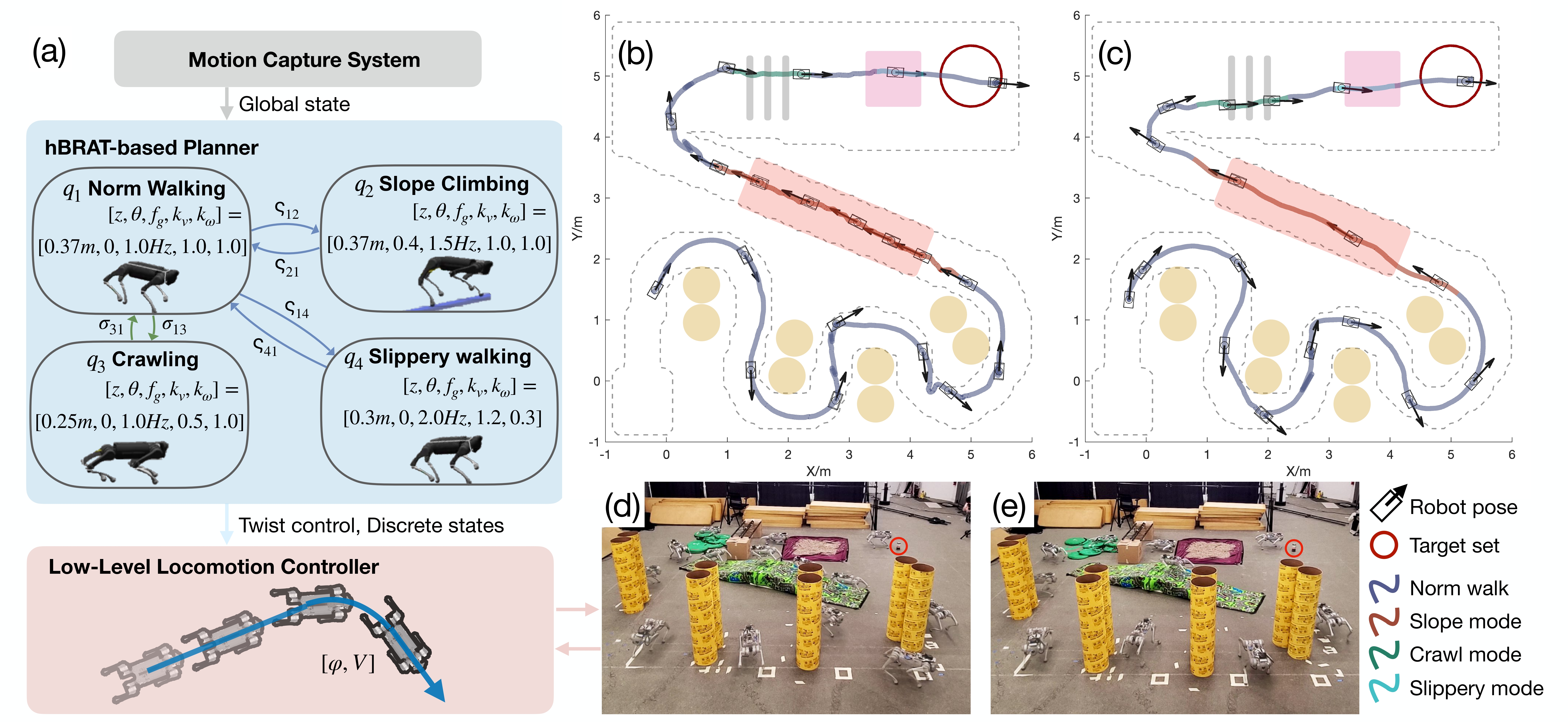}
\caption{(a) The proposed control pipeline system diagram. Top: The high-level Hybrid Reach-Avoid Planner leverages a precalculated hBRAT to derive optimal continuous control and discrete dynamic mode with estimated states from the motion capture system at run time. Bottom: Low-level locomotion policy tracks velocity command and discrete states from the hBRAT-based planner with joint-level control. (b, c) Top-down view of the robot’s trajectory during the Barkour task with both MPC controller (b) and RL policy (c). Discrete modes are marked with different trajectory colors. (d, e) Overlaid real-world trajectory of the robot for both MPC controller (d) and RL policy (e).}
\label{fig:hw_main}
\end{figure*}

\section{Full System Experiments}
\label{cases}

 To illustrate feasibility of real-time online execution of the proposed hBRAT framework proposed in Section~\ref{hybrid_BRAT}, we test it on a real-world quadruped experiment in the adapted Barkour \citep{caluwaerts2023barkourbenchmarkinganimallevelagility} benchmarking environment inspired by dog agility competitions. The environment consists of a diverse set of obstacles as shown in Fig.~\ref{fig:hw_main}(d-e), to test a wide range of locomotion behaviors:
(1) weave poles that require tight turning;
(2) a slope to test the quadruped’s climbing gait;
(3) an overhang obstacle that requires crawling;
(4) a slippery terrain with additional environmental uncertainty.
The quadruped needs to navigate from a designed start area to the goal area while completing each obstacle. 

To generate safe and diverse locomotion for the benchmark task, we designed a hierarchical control structure comprised of two components: (1) a high-level hBRAT-based planner, which takes in the robot’s current state and leverages a precalculated hBRAT to reason about both optimal discrete mode and continuous control; (2) a low-level locomotion controller which tracks robot’s target pose and velocity generated by the high-level planner. 

We formulate the planner task as a reach-avoid problem and compute the hBRAT over time. Therefore, we are able to query the optimal continuous control and discrete mode from the hBRAT at each timestep with the robot’s state. We assume the planner has \textit{a priori} knowledge of environmental obstacles, which are subsequently mapped into the predefined failure set $\mathcal{L}$. The high-level reach-avoid planner uses the following simplified continuous dynamics to describe torso’s evolution under a twist-commanded control:

\vspace{-0.4em}
\begin{equation}
\begin{aligned}
\dot{x} &= k_{v}V \sin \varphi + d_x, \quad
\dot{y} = k_{v}V \cos \varphi + d_y, \\
\dot{\varphi} &= k_{\omega} \omega_{N}, \quad
\dot{V} = f_N / m,\quad ||d_x,d_y||\leq0.5~\mathrm{m/s}.
\end{aligned}
\end{equation}

The simplified continuous dynamics reduce the state dimension, making hBRAT computation tractable. This is necessary due to the curse of dimensionality in grid-based HJ methods, whose computational cost scales exponentially with state dimension. Consequently, reduced-order models are commonly used for practical reach-avoid analysis in robotics \citep{bansal2017hamilton}.

Continuous state $[x,y,\varphi, V]^T$ represents the torso’s global $xy$ position, heading angle, and forward velocity, $m$ is the torso's mass. Controls $[\omega_{N}, f_{N}]^T$ are the angular velocity and forward force inputs. Disturbance $[d_x,d_y]^T$, bounded based on empirical estimation, captures unmodeled dynamics, environmental interactions, and mismatch between the planned high-level motion and the motion executed by the low-level locomotion controller. To describe the torso height, pitch angle, gait frequency, linear velocity gain, and angular velocity gain, we use discrete states $[z,\theta,f_{g},k_{v}, k_{\omega}]^T$. These are low-frequency states that remain fixed in each discrete mode, as shown in Fig.~\ref{fig:hw_main}(a).

In our hybrid model, each locomotion mode is associated with a particular obstacle type. Mode transitions use a mix of controlled and forced switches as a modeling choice, although for the tasks considered both lead to the same behavior. For example, the transition from walking $q_1$ to crawling $q_3$ under an overhang is modeled as controlled; however, any non-crawling mode would result in collision, while outside the overhang normal walking is always optimal due to its higher speed. As a result, the reach-avoid optimization selects crawling only within the overhang region. Transitions for slope climbing ($q_2$) and slippery terrain ($q_4$) are encoded as forced to explicitly demonstrate the alternative hybrid modeling formulation. This mixed representation emphasizes that mode selection is ultimately governed by environment-induced feasibility and optimality.

The hBRAT for this task is calculated on a $[x,y,\varphi, V]$ grid of $[70,70,40,5]$ points over a time horizon of $T=90$~s. We performed this one-time offline computation on an AMD Ryzen 9 4800HS CPU with 16 GB RAM, which took approximately 7 hours. During deployment, the robot inferences online from the pre-computed hBRAT with real-world state to determine the optimal control action and switching logic. The inference is computationally light, taking around 1ms, with an AMD Ryzen 5 5500U single-board computer and 16GB of RAM.
The platform is a 12-DOF Unitree Go1 quadruped tracked by a Vicon motion capture system at 120~Hz, providing global state estimation. Both our high-level planner and low-level locomotion run on a separate PC mounted on Go1 and interact with Go1’s low-level motor controller through wired Ethernet.

\vspace{-1em}

\subsection{Simulation Validation}

Before proceeding to hardware experiments and in order to validate our approach, we conduct a simulation study comparing our method against sampling-based, deterministic, and safety-critical MPC baselines.  All planners operate on the same reduced-order dynamics and task setups.

We evaluate three baselines \textbf{(a) MPPI}: a Model Predictive Path Integral planner for hybrid dynamical systems following \citet{Parwana2025hybridMPPI}. The planner samples $M = 128$ control rollouts around a nominal sequence with additive Gaussian noise ($\sigma^2 = 100$) over a $T = 0.4$~s horizon, combined via softmin weighting with temperature $\lambda = 2.0$. \textbf{(b) Deterministic MPC}: a direct multiple-shooting nonlinear optimization over a $T = 2.5$~s horizon ($N = 25$ at $\Delta t = 0.1$~s) solved at each control step in CasADi~\citep{andersson2018casadi}, tracking a precomputed reach trajectory reference with hard signed-distance constraints. \textbf{(c) CBF-MPC}: the deterministic MPC of (b) augmented with discrete-time control barrier function (CBF) constraints following \citet{zeng2021safety}. With $h(x) = \mathrm{sdf}(x)$, we enforce $h_{k+1} \ge (1-\gamma)\, h_k$ at every node with discrete-time CBF decay rate $\gamma = 0.3 \in (0, 1]$.

All three baselines were extensively tuned for a fair comparison. Their cost pairs a shared collision penalty with a progress term towards the goal. The resulting high-level twist commands are tracked by a learned locomotion controller, which maps twist commands to joint-level actions.

In simulation, the quadruped is tasked with reaching a target region while avoiding the same obstacle configuration used in the real world test. To evaluate robustness, we increase task difficulty by applying torso velocity disturbances of growing magnitude. For each disturbance level, we run 100 trials per planner in MuJoCo~\citep{todorov2012mujoco}, terminating each trial on goal entry or after a 40-second timeout. We report success reach rate, safe reach rate, time-to-goal over successful runs, and obstacle clearance metrics based on minimum and average signed distance (SD). Quantitative results are summarized in Table \ref{tab:sim_5metric}, with representative trajectories in Fig.~\ref{fig:hbrat_mppi_sim_comp}.

\begin{table}[ht]
\centering
\caption{Simulation results for the hBRAT planner and the MPPI, MPC, and MPC-CBF baselines across disturbance levels (100 trials per level). Values are reported as mean $\pm$ standard deviation; $\dagger$ and $\ddagger$ indicate statistically significant differences ($p<0.01$ and $p<0.001$). \textbf{Succ.\ Reach} is the fraction of trials that reach the goal; \textbf{Safe Reach} is the fraction of \emph{reached} trials without collisions, where a collision is any timestep when the signed distance to obstacles becomes negative; \textbf{T$_{\text{goal}}$} is the time to goal for successful trials; and \textbf{Min SD} / \textbf{Avg SD} are the minimum and mean signed distance to obstacles.}
\label{tab:sim_5metric}
\setlength{\tabcolsep}{3.0pt}
\renewcommand{\arraystretch}{1.10}
\scriptsize
\resizebox{\columnwidth}{!}
{
\begin{tabular}{c l|c c c c c}
\toprule
  \makecell{\textbf{Dist.}\\\textbf{(m/s)}}
& \textbf{Planner}
& \makecell{\textbf{Succ.}\\\textbf{Reach}\,$\uparrow$}
& \makecell{\textbf{Safe}\\\textbf{Reach}\,$\uparrow$}
& \textbf{$T_{\text{goal}}$ (s)\,$\downarrow$}
& \makecell{\textbf{Min}\\\textbf{SD (m)}\,$\uparrow$}
& \makecell{\textbf{Avg}\\\textbf{SD (m)}\,$\uparrow$} \\
\midrule
 \multirow{4}{*}{0.0} & hBRAT   & $\mathbf{1.00}$ & $\mathbf{1.00}$ & $22.91 \pm 0.20$ & $\mathbf{0.08 \pm 0.02}$ & $0.35 \pm 0.01$ \\
  & MPPI    & $0.95$ & $0.02$ & $37.46 \pm 10.31^{\ddagger}$ & $-0.38 \pm 0.23^{\ddagger}$ & $0.14 \pm 0.14^{\ddagger}$ \\
  & MPC    & $\mathbf{1.00}$ & $0.99$ & $\mathbf{22.71 \pm 0.28}^{\ddagger}$ & $0.02 \pm 0.00^{\ddagger}$ & $\mathbf{0.38 \pm 0.00}^{\ddagger}$ \\
  & MPC-CBF & $\mathbf{1.00}$ & $\mathbf{1.00}$ & $22.88 \pm 0.37 ^{\ddagger}$ & $0.02 \pm 0.00^{\ddagger}$ & $\mathbf{0.38 \pm 0.01}^{\ddagger}$ \\
\midrule
 \multirow{4}{*}{0.2} & hBRAT   & $\mathbf{1.00}$ & $\mathbf{0.99}$ & $\mathbf{23.08 \pm 0.35}$ & $\mathbf{0.07 \pm 0.02}$ & $0.35 \pm 0.01$ \\
  & MPPI    & $0.93$ & $0.17$ & $36.33 \pm 10.37^{\ddagger}$ & $-0.32 \pm 0.27^{\ddagger}$ & $0.16 \pm 0.13^{\ddagger}$ \\
  & MPC    & $\mathbf{1.00}$ & $0.84$ & $23.61 \pm 1.03^{\ddagger}$ & $0.01 \pm 0.01^{\ddagger}$ & $\mathbf{0.37 \pm 0.01}^{\ddagger}$ \\
  & MPC-CBF & $\mathbf{1.00}$ & $0.90$ & $23.52 \pm 0.96^{\dagger}$ & $0.02 \pm 0.01^{\ddagger}$ & $\mathbf{0.37 \pm 0.02}^{\ddagger}$ \\
\midrule
 \multirow{4}{*}{0.4} & hBRAT   & $0.99$ & $\mathbf{0.81}$ & $\mathbf{23.58 \pm 0.52}$ & $\mathbf{0.04 \pm 0.05}$ & $0.34 \pm 0.02$ \\
  & MPPI    & $0.89$ & $0.15$ & $35.24 \pm 11.57^{\ddagger}$ & $-0.31 \pm 0.24^{\ddagger}$ & $0.17 \pm 0.14^{\ddagger}$ \\
  & MPC    & $\mathbf{1.00}$ & $0.48$ & $25.41 \pm 2.17^{\ddagger}$ & $-0.01 \pm 0.04^{\ddagger}$ & $0.35 \pm 0.02^{\ddagger}$ \\
  & MPC-CBF & $\mathbf{1.00}$ & $0.44$ & $25.69 \pm 2.51^{\ddagger}$ & $-0.01 \pm 0.04^{\ddagger}$ & $\mathbf{0.36 \pm 0.02}^{\ddagger}$ \\
\bottomrule
\end{tabular}
}
\end{table}

Table \ref{tab:sim_5metric} shows that hBRAT planner achieves the highest safe reach rate at every disturbance level. At $\sigma = 0.4$~m/s it reaches $0.81$, while MPC, MPC-CBF, and MPPI drop to $0.48$, $0.44$, and $0.15$. Differences between hBRAT and each baseline on $T_{\text{goal}}$, Min SD, and Avg SD are statistically significant under the Mann-Whitney $U$ test. The deterministic MPC baselines match hBRAT on success reach, but their safe reach performance degrades sharply as the disturbance grows. Since their safety enforcement is local to the prediction horizon, MPC-CBF performs similarly to deterministic MPC on safety, as its discrete CBF condition is slack-softened to retain feasibility under fast disturbances. MPPI’s finite-horizon, soft-constrained setup results in the weakest robustness, with safety violations on most trials.  As Fig.~\ref{fig:hbrat_mppi_sim_comp} shows, hBRAT recovers from disturbances and returns to a near-optimal path to the goal, while baselines reach the goal but repeatedly penetrate the obstacle set. 

\begin{figure}[h] 
\begin{center} 

\includegraphics[width=0.99\columnwidth]{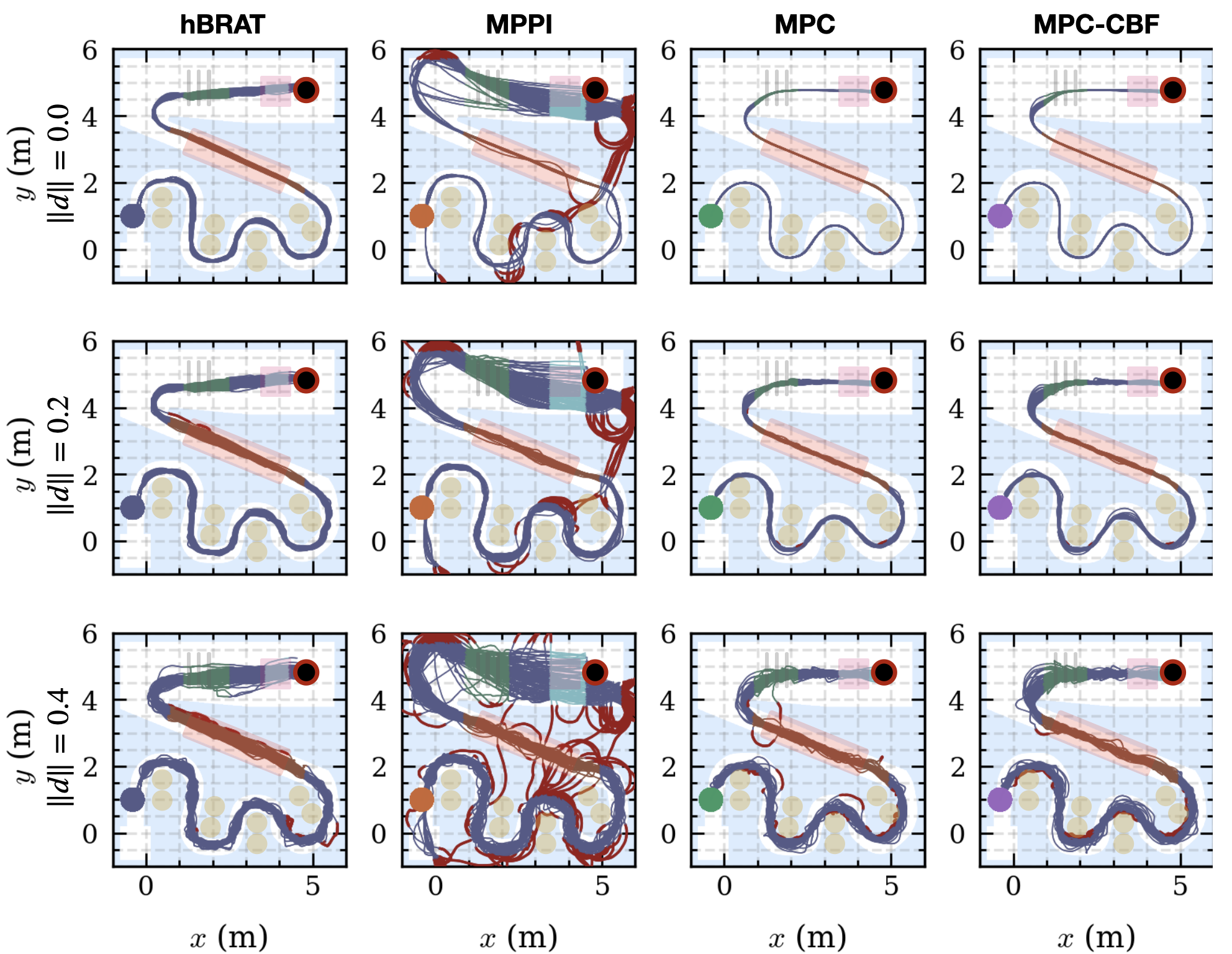}
\vspace{-0.5em}
\caption{Representative trajectories in simulation for the hBRAT planner and three baselines across disturbance levels.}
\label{fig:hbrat_mppi_sim_comp}
\end{center}
\end{figure}

Notably, at the highest disturbance level ($0.4$~m/s), hBRAT's success reach rate drops slightly from $1.00$ to $0.99$ and its safe reach rate drops more substantially to $0.81$. Both effects stem from the same underlying causes. This disturbance level sits near the assumed bound used in the hBRAT computation ($0.5$~m/s), where the injected disturbance combined with model mismatch between the planner and the locomotion controller can exceed that bound breaking the assumptions tied to the proposed method guarantees, leading to a rare violation of the underlying reachability assumptions and, in turn, occasional collisions. These collisions are shallow, with peak penetration averaging $0.048$~m, which lies within the discretization margin of error arising from both the initial discretization of the obstacle set, and the additional error introduced by the HJ level-set solver on our grid with spatial resolution $\Delta x = 0.1$~m~\citep{mitchell2008flexible}.

\subsection{Real World Safety Across Locomotion Policies}

Having validated the planner’s behavior in a controlled pre-hardware study, we now evaluate its performance on the real robot quadruped.
For all experiments, we use two different low-level locomotion controllers: (1) a model-based controller (MPC) \citep{kim2019highly}; (2) a reinforcement learning (RL) policy \citep{margolis2022walktheseways}. Both low-level controllers take continuous velocity twist commands and discrete states as inputs to track the high-level planner.

Our experiment results are shown in Fig.~\ref{fig:hw_main} and the accompanying video. In Fig.~\ref{fig:hw_main}(b) and (c), the quadruped begins in normal walking mode to navigate the weave poles (purple trajectory), then switches to slope climbing mode near the incline (dark red). As the robot approaches the overhang, the planner selects a transition to crawl-walking mode (green), lowering the body to maintain clearance and avoid collision. After clearing the overhang, it resumes normal walking. On the slippery gravel terrain, the system triggers a forced transition to a higher-frequency gait to prevent slipping (light blue). These discrete mode transitions demonstrate the framework’s ability to reason about safe and optimal behaviors in complex environments.

The robot's velocity during these two runs is shown in Fig.~\ref{fig:hw_vel_map}. In both cases, the robot completes all the obstacles without collisions. For the MPC controller, the quadruped completes the course in $58.6$~s, whereas the RL policy achieves a faster traversal time of $17.9$~s. The MPC controller deployed here relies on the linearized centroidal dynamics model, simplifying the robot as a rigid body with reduced degrees of freedom to ensure optimization efficiency. However, it inherently limits the controller's performance under highly dynamic situations. The hBRAT-based planner ensures safety even when the MPC controller has limitations in tracking its velocity commands. In comparison, the RL policy is trained with whole-body dynamics, enabling better performance when tracking velocity commands from the hBRAT-based planner.

\begin{figure}[b] 
\begin{center} 
\includegraphics[width=0.99\columnwidth]{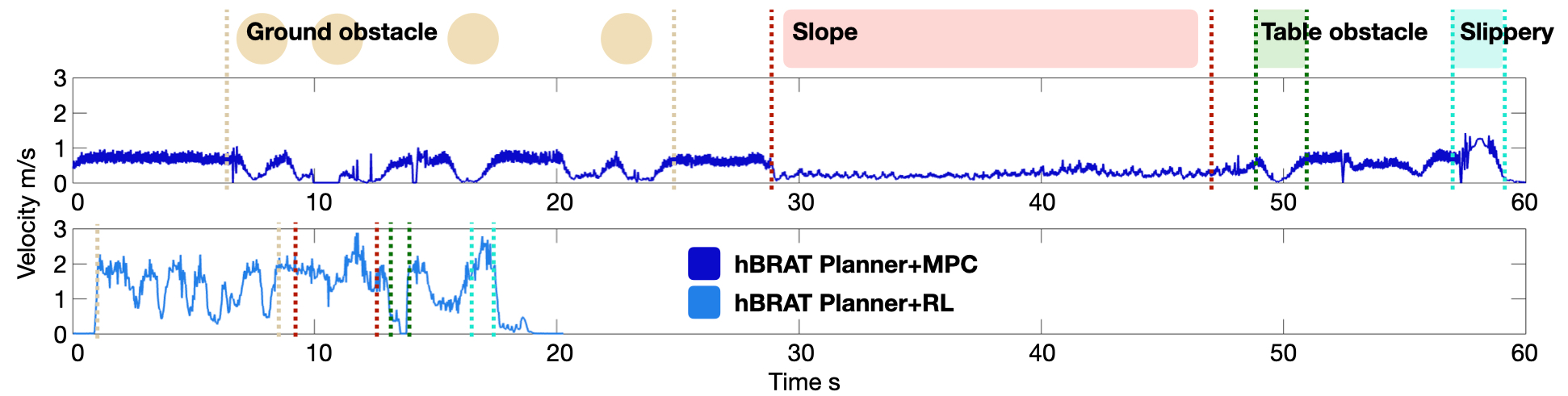}
\vspace{-0.5em}
\caption{Robot's velocity during a single task for both low-level locomotion controllers.}
\label{fig:hw_vel_map}
\vspace{-1em}
\end{center}
\end{figure}

From both subplots, one can observe that the robot autonomously reduces its speed when navigating tight turns around the weave poles, ensuring a smaller turning radius. Furthermore, the robot slows down when crawling through the overhang obstacle with a turning velocity constraint. These behaviors showcase that the hBRAT-based planner consistently provides optimal continuous control within each discrete dynamic mode. 

\begin{table}[t]
\centering
\caption{Hardware results over 10 trials with MPC and RL locomotion controllers. \textbf{Succ.\ Reach} is the fraction of trials that reach the goal; \textbf{Safe Reach} is the fraction of \emph{reached} trials without collisions; and \textbf{Min SD} / \textbf{Avg SD} are the minimum and mean signed distance to obstacles.}
\label{tab:hw_5metric}
\setlength{\tabcolsep}{4.0pt}
\renewcommand{\arraystretch}{1.10}
\scriptsize
\resizebox{\columnwidth}{!}
{
\begin{tabular}{l|c c c c}
\toprule
\makecell{\textbf{Low}\\\textbf{Level}}
  & \makecell{\textbf{Succ.}\\\textbf{Reach}\,$\uparrow$}
  & \makecell{\textbf{Safe}\\\textbf{Reach}\,$\uparrow$}
  & \textbf{Min SD (m)\,$\uparrow$}
  & \textbf{Avg SD (m)\,$\uparrow$} \\
\midrule
  MPC & 1.00 & 1.00  & $0.05 \pm 0.04$ & $0.27 \pm 0.03$ \\
  RL  & 1.00 & 1.00  & $0.12 \pm 0.11$ & $0.29 \pm 0.02$ \\
\bottomrule
\end{tabular}
}
\end{table}

Table~\ref{tab:hw_5metric} reports results over 10 trials on hardware, where each trial introduces disturbances, including loose slippery terrain, and human disturbances attempting to push or halt the robot. We omit time-to-goal because operator-injected disturbances have arbitrary timing and duration, making $T_{\text{goal}}$ not a clean controller metric. The successful navigation across two different low-level locomotion controllers highlights the robustness of the hBRAT-based planner. Regardless of whether the underlying locomotion controller is model-based or learned, the hBRAT-based planner is able to provide long-horizon safety guarantees while enabling the low-level controller to execute the optimal locomotion behavior in complex environments. This under the assumption that unmodeled dynamics, environmental interactions, state estimation error, and mismatch between the planned high-level dynamics and the motion executed by the low-level controller, remains within the worst-case bounded additive disturbances modeled in the planner, and that grid resolution and numerical integration error of the underlying value function remain small enough not to compromise safety margins.

\subsection{Robustness to External Disturbances}

To further evaluate the close-loop robustness of our method, we introduced random environmental disturbances by placing slippery material before the overhang obstacle (green plastic film). These changes were not encoded in the failure set when computing the hBRAT. Notably, the framework remained robust to these dynamic uncertainties with both low-level locomotion controllers, which was once again guaranteed by the hBRAT under the assumption that the overall disturbances lie within the bounds of the adversarial disturbance considered during its calculation.

In addition to the slippery terrain, we introduced random force disturbances by kicking and dragging the robot during the experiment, as indicated by red arrows in Fig.~\ref{fig:hmn_dist_traj}. In Fig.~\ref{fig:hmn_dist_traj}(b), the robot's torso was pulled backward, pushing it closer to an overhanging obstacle. In response, the hBRAT-based planner ensured safety by reactivating the crawl-walking mode, demonstrating its closed-loop ability to reason over both continuous and discrete control decisions to maintain safety under unexpected disturbances.

The robot can also recover when dragged into unsafe regions, as shown in Fig.~\ref{fig:hmn_dist_traj}(c), which was not possible using our preliminary framework~\citep{borquez2024hybridreach}. In our previous quadruped experiments, entering an unsafe region triggered a forced transition into frozen dynamics, enforcing a reach-avoid behavior while using a purely reach hBRT formulation. While this approach kept the robot away from hazards, it lacked recovery capabilities once inside an unsafe region, whether due to intentional disturbances or state estimation errors. By leveraging the reach-avoid formulation, the hBRAT-based planner provides a fallback safety control strategy, allowing the robot to actively recover even if it momentarily enters an unsafe region.

\begin{figure}[t] 
\begin{center} 

\includegraphics[width=0.99\columnwidth]{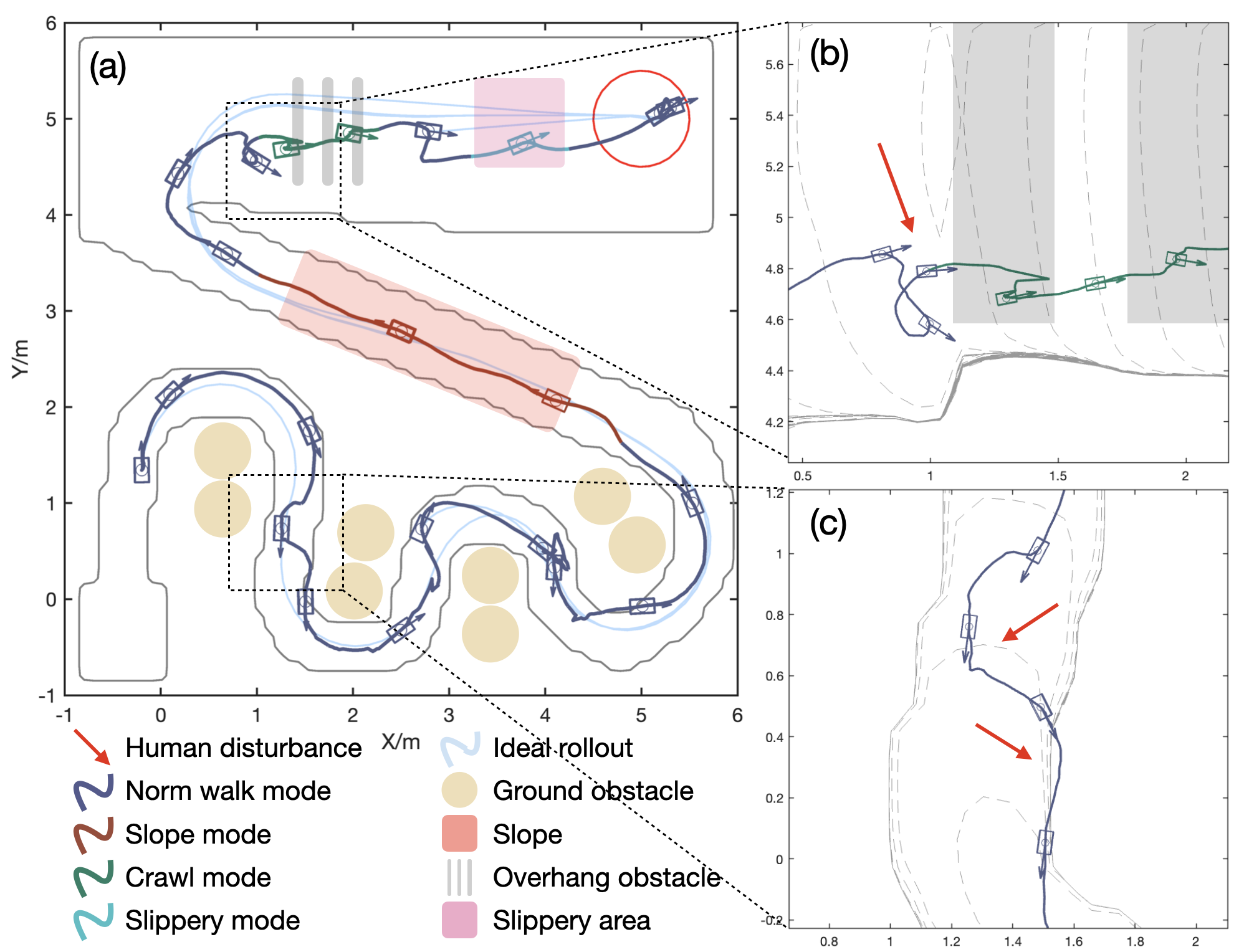}
\vspace{-0.5em}
\caption{(a) Robot trajectory with human disturbances (red arrow). (b) Partial trajectory shows the robot autonomously switches from $q_1$ to $q_3$ 
after being pushed into the overhang obstacle area to ensure safety. (c) Partial trajectory shows the robot recovers from the boundary of the obstacle grids.}
\label{fig:hmn_dist_traj}
\vspace{-1em}
\end{center}
\end{figure}

The proposed framework integrates the high-level reach-avoid planner with low-level locomotion controllers, enabling safe and efficient real-world navigation in the Barkour environment setup. The hierarchical structure of our approach allows seamless integration of different low-level controllers, including both model-based and learned policies, as demonstrated in our experiments. Nevertheless, it should be emphasized that the presented framework only provides the top level of a hierarchical planning architecture. Additional modules such as environmental perception, real-time state estimation, and disturbance estimation play a vital role in achieving safe and agile locomotion in real-world environments \citep{miki2022learning}. We will further explore integrating our framework within such modules in the future.

%% file: 11_conclusion.tex
\section{\label{conclusion}Discussion And Future Work}
\vspace{0em}

In this work, we extend the classical Hamilton-Jacobi (HJ) reachability framework to hybrid dynamical systems, introducing a generalized methodology for computing hybrid backward reachable tubes and hybrid backward reach-avoid tubes. By leveraging these reachable sets, we develop a hybrid least restrictive safety filter, which minimally modifies a nominal hybrid policy to prevent unsafe transitions, as well as a hybrid reach-avoid control framework, which simultaneously guarantees both safety and task completion. We demonstrate the effectiveness of these methods through extensive simulations and real-world hardware experiments on quadrupedal robots, showcasing their applicability in contact-rich and dynamic environments.

A key challenge in hybrid reachability remains scalability to higher-dimensional systems. Our framework currently relies on grid-based numerical methods, whose computational complexity scales exponentially with the number of continuous states, limiting direct applicability to high-dimensional hybrid systems. Future work will explore recent learning-based approaches for solving high-dimensional HJI-VI or structure-exploiting techniques such as Sum-of-Squares programming for polynomial dynamics and Zonotope methods for affine dynamics. These methods offer a promising direction for improving computational efficiency while maintaining rigorous safety guarantees.

Another limitation of our current approach is the reliance on a priori knowledge of the full environment to compute the hybrid BRT or BRAT offline before deployment. In many real-world scenarios, robotic systems operate in partially known or dynamically changing environments, where precomputing the full reachable set is impractical. Future work could explore incremental or online reachability methods, where reachable sets are progressively refined as new information about the environment becomes available. Additionally, an exciting direction is the extension of our framework to black-box hybrid systems, where the dynamics, mode domains, or transition surfaces are not explicitly defined but instead learned from data. By leveraging model learning techniques or data-driven approximations of reachable sets, it may be possible to apply reachability-based safety filtering and reach-avoid control even in settings where an analytical representation of the hybrid system is unavailable.

A fundamental consideration in hybrid systems is the prevention of Zeno behaviors. Our current computation algorithm avoids Zeno behaviors by using a finite timestep $\delta$ for discrete transitions (automatically selected by the Level Set Toolbox to balance computational efficiency with approximation fidelity). However, this comes at a price of approximate value functions and it would be interesting to explore how to properly handle Zeno behaviors during the reachable set and safety controller computations in Algorithm \ref{alg1} and Algorithm \ref{alg2}.

Finally, applying the proposed framework to a broader range of robotic systems presents an exciting opportunity. While we validate our approach on quadrupedal locomotion, hybrid reachability and safety filtering have applications in legged manipulation, multi-contact locomotion, and aerial-ground hybrid vehicles. Extending our framework to these domains would further demonstrate the versatility and impact of reachability-based hybrid safety guarantees, enabling robots to operate in increasingly complex and uncertain environments.

%% file: main.bbl
\begin{thebibliography}{60}
\providecommand{\natexlab}[1]{#1}
\providecommand{\url}[1]{\texttt{#1}}
\providecommand{\urlprefix}{URL }
\expandafter\ifx\csname urlstyle\endcsname\relax
  \providecommand{\doi}[1]{DOI:\discretionary{}{}{}#1}\else
  \providecommand{\doi}{DOI:\discretionary{}{}{}\begingroup \urlstyle{rm}\Url}\fi

\bibitem[{Althoff and Krogh(2012)}]{zono_avoid_intersect_2012}
Althoff M and Krogh BH (2012) Avoiding geometric intersection operations in reachability analysis of hybrid systems.
\newblock In: \emph{Proceedings of the 15th ACM international conference on Hybrid Systems: Computation and Control}.

\bibitem[{Althoff et~al.(2010)Althoff, Stursberg and Buss}]{zono_poly_2010}
Althoff M, Stursberg O and Buss M (2010) Computing reachable sets of hybrid systems using a combination of zonotopes and polytopes.
\newblock \emph{Nonlinear Analysis: Hybrid Systems} IFAC World Congress 2008.

\bibitem[{Alt{\i}n and Sanfelice(2020)}]{altin2020semicontinuity}
Alt{\i}n B and Sanfelice RG (2020) Semicontinuity properties of solutions and reachable sets of nominally well-posed hybrid dynamical systems.
\newblock In: \emph{2020 59th IEEE conference on decision and control (CDC)}. IEEE.

\bibitem[{Alur et~al.(1991)Alur, Courcoubetis, Henzinger and Ho}]{alur1991hybrid}
Alur R, Courcoubetis C, Henzinger TA and Ho PH (1991) Hybrid automata: An algorithmic approach to the specification and verification of hybrid systems.
\newblock In: \emph{International Hybrid Systems Workshop}. Springer.

\bibitem[{Alur and Dill(1994)}]{alur1994theory}
Alur R and Dill DL (1994) A theory of timed automata.
\newblock \emph{Theoretical computer science} .

\bibitem[{Andersson et~al.(2018)Andersson, Gillis, Horn, Rawlings and Diehl}]{andersson2018casadi}
Andersson J, Gillis J, Horn G, Rawlings J and Diehl M (2018) Casadi—a software framework for nonlinear optimization and optimal control.
\newblock \emph{Mathematical Programming Computation} 11(1): 1--36.

\bibitem[{Bansal et~al.(2017)Bansal, Chen, Herbert and Tomlin}]{bansal2017hamilton}
Bansal S, Chen M, Herbert S and Tomlin CJ (2017) {Hamilton-Jacobi Reachability}: A brief overview and recent advances.
\newblock In: \emph{IEEE Conference on Decision and Control (CDC)}.

\bibitem[{Bansal et~al.(2020)Bansal, Chen, Tanabe and Tomlin}]{bansal2020provably}
Bansal S, Chen M, Tanabe K and Tomlin CJ (2020) Provably safe and scalable multivehicle trajectory planning.
\newblock \emph{IEEE Transactions on Control Systems Technology (TCST)} .

\bibitem[{Bansal and Tomlin(2021)}]{bansal2021deepreach}
Bansal S and Tomlin CJ (2021) {DeepReach}: A deep learning approach to high-dimensional reachability.
\newblock In: \emph{IEEE International Conference on Robotics and Automation (ICRA)}.

\bibitem[{Benvenuti et~al.(2008)Benvenuti, Bresolin, Casagrande, Collins, Ferrari, Mazzi, Sangiovanni-Vincentelli and Villa}]{ariadne_denotable_reach08}
Benvenuti L, Bresolin D, Casagrande A, Collins P, Ferrari A, Mazzi E, Sangiovanni-Vincentelli A and Villa T (2008) Reachability computation for hybrid systems with ariadne.
\newblock \emph{IFAC Proceedings Volumes} .

\bibitem[{Borquez et~al.(2024{\natexlab{a}})Borquez, Chakraborty, Wang and Bansal}]{borquezFiltering2023}
Borquez J, Chakraborty K, Wang H and Bansal S (2024{\natexlab{a}}) On safety and liveness filtering using hamilton–jacobi reachability analysis.
\newblock \emph{IEEE Transactions on Robotics} \doi{10.1109/TRO.2024.3454470}.

\bibitem[{Borquez et~al.(2024{\natexlab{b}})Borquez, Peng, Chen, Nguyen and Bansal}]{borquez2024hybridreach}
Borquez J, Peng S, Chen Y, Nguyen Q and Bansal S (2024{\natexlab{b}}) {Hamilton-Jacobi Reachability Analysis for Hybrid Systems with Controlled and Forced Transitions}.
\newblock In: \emph{Proceedings of Robotics: Science and Systems}. Delft, Netherlands.
\newblock \doi{10.15607/RSS.2024.XX.006}.

\bibitem[{Breeden and Panagou(2023)}]{Satellite_CBF}
Breeden J and Panagou D (2023) Safety-critical control for systems with impulsive actuators and dwell time constraints.
\newblock \emph{IEEE Control Systems Letters} .

\bibitem[{Bui et~al.(2022)Bui, Giovanis, Chen and Shriraman}]{bui2022optimizeddp}
Bui M, Giovanis G, Chen M and Shriraman A (2022) Optimizeddp: An efficient, user-friendly library for optimal control and dynamic programming.
\newblock \emph{arXiv preprint arXiv:2204.05520} .

\bibitem[{Caluwaerts et~al.(2023)Caluwaerts, Iscen, Kew, Yu, Zhang, Freeman, Lee, Lee, Saliceti, Zhuang, Batchelor, Bohez, Casarini, Chen, Cortes, Coumans, Dostmohamed, Dulac-Arnold, Escontrela, Frey, Hafner, Jain, Jyenis, Kuang, Lee, Luu, Nachum, Oslund, Powell, Reyes, Romano, Sadeghi, Sloat, Tabanpour, Zheng, Neunert, Hadsell, Heess, Nori, Seto, Parada, Sindhwani, Vanhoucke and Tan}]{caluwaerts2023barkourbenchmarkinganimallevelagility}
Caluwaerts K, Iscen A, Kew JC, Yu W, Zhang T, Freeman D, Lee KH, Lee L, Saliceti S, Zhuang V, Batchelor N, Bohez S, Casarini F, Chen JE, Cortes O, Coumans E, Dostmohamed A, Dulac-Arnold G, Escontrela A, Frey E, Hafner R, Jain D, Jyenis B, Kuang Y, Lee E, Luu L, Nachum O, Oslund K, Powell J, Reyes D, Romano F, Sadeghi F, Sloat R, Tabanpour B, Zheng D, Neunert M, Hadsell R, Heess N, Nori F, Seto J, Parada C, Sindhwani V, Vanhoucke V and Tan J (2023) Barkour: Benchmarking animal-level agility with quadruped robots.
\newblock \urlprefix\url{https://arxiv.org/abs/2305.14654}.

\bibitem[{Chai and Sanfelice(2018)}]{chai2018forward}
Chai J and Sanfelice RG (2018) Forward invariance of sets for hybrid dynamical systems (part i).
\newblock \emph{IEEE Transactions on Automatic Control} .

\bibitem[{Chen et~al.(2012)Chen, Abraham and Sankaranarayanan}]{taylor_hyb_flow12}
Chen X, Abraham E and Sankaranarayanan S (2012) Taylor model flowpipe construction for non-linear hybrid systems.
\newblock In: \emph{2012 IEEE 33rd Real-Time Systems Symposium}. IEEE.

\bibitem[{Chen et~al.(2013)Chen, {\'A}brah{\'a}m and Sankaranarayanan}]{flow_star_2013}
Chen X, {\'A}brah{\'a}m E and Sankaranarayanan S (2013) Flow*: An analyzer for non-linear hybrid systems.
\newblock In: \emph{Computer Aided Verification: 25th International Conference, CAV 2013, Saint Petersburg, Russia, July 13-19, 2013. Proceedings 25}. Springer.

\bibitem[{Choi et~al.(2022)Choi, Agrawal, Sreenath, Tomlin and Bansal}]{ROA_reset_2022}
Choi JJ, Agrawal A, Sreenath K, Tomlin CJ and Bansal S (2022) Computation of regions of attraction for hybrid limit cycles using reachability: An application to walking robots.
\newblock \emph{IEEE Robotics and Automation Letters} \doi{10.1109/LRA.2022.3151143}.

\bibitem[{Choi et~al.(2021)Choi, Lee, Sreenath, Tomlin and Herbert}]{cbvf}
Choi JJ, Lee D, Sreenath K, Tomlin CJ and Herbert SL (2021) Robust control barrier–value functions for safety-critical control.
\newblock In: \emph{2021 60th IEEE Conference on Decision and Control (CDC)}.
\newblock \doi{10.1109/CDC45484.2021.9683085}.

\bibitem[{Collins and Lygeros(2005)}]{finite_time_denotable_2005}
Collins P and Lygeros J (2005) Computability of finite-time reachable sets for hybrid systems.
\newblock In: \emph{Proceedings of the 44th IEEE Conference on Decision and Control}.
\newblock \doi{10.1109/CDC.2005.1582902}.

\bibitem[{Curtis(2019)}]{curtis2019orbital}
Curtis HD (2019) \emph{Orbital mechanics for engineering students}.
\newblock Butterworth-Heinemann.

\bibitem[{Damm et~al.(2007)Damm, Disch, Hungar, Jacobs, Pang, Pigorsch, Scholl, Waldmann and Wirtz}]{large_discrete_2007}
Damm W, Disch S, Hungar H, Jacobs S, Pang J, Pigorsch F, Scholl C, Waldmann U and Wirtz B (2007) Exact state set representations in the verification of linear hybrid systems with large discrete state space.
\newblock In: \emph{International Symposium on Automated Technology for Verification and Analysis}. Springer.

\bibitem[{Dhinakaran et~al.(2017)Dhinakaran, Chen, Chou, Shih and Tomlin}]{coll_avoid_HJI_hybrid_2017}
Dhinakaran A, Chen M, Chou G, Shih JC and Tomlin CJ (2017) A hybrid framework for multi-vehicle collision avoidance.
\newblock In: \emph{Conference on Decision and Control}.

\bibitem[{Fisac et~al.(2015)Fisac, Chen, Tomlin and Sastry}]{Fisac15}
Fisac J, Chen M, Tomlin CJ and Sastry S (2015) {Reach-avoid problems with time-varying dynamics, targets and constraints}.
\newblock In: \emph{HSCC}.

\bibitem[{Fisac et~al.(2019)Fisac, Lugovoy, Rubies-Royo, Ghosh and Tomlin}]{fisac2019bridging}
Fisac JF, Lugovoy NF, Rubies-Royo V, Ghosh S and Tomlin CJ (2019) Bridging hamilton-jacobi safety analysis and reinforcement learning.
\newblock In: \emph{IEEE International Conference on Robotics and Automation}.

\bibitem[{Gillula et~al.(2011)Gillula, Hoffmann, Huang, Vitus and Tomlin}]{drone_backflip_2011}
Gillula JH, Hoffmann GM, Huang H, Vitus MP and Tomlin CJ (2011) Applications of hybrid reachability analysis to robotic aerial vehicles.
\newblock \emph{International Journal of Robotics Research} .

\bibitem[{Girard(2013)}]{girard2013computational}
Girard A (2013) \emph{Computational approaches to analysis and control of hybrid systems}.
\newblock PhD Thesis, Universit{\'e} de Grenoble.

\bibitem[{Goebel et~al.(2009)Goebel, Sanfelice and Teel}]{goebel2009hybrid}
Goebel R, Sanfelice RG and Teel AR (2009) Hybrid dynamical systems.
\newblock \emph{IEEE control systems magazine} .

\bibitem[{Henzinger et~al.(1995{\natexlab{a}})Henzinger, Ho and Wong-Toi}]{henzinger1995user}
Henzinger TA, Ho PH and Wong-Toi H (1995{\natexlab{a}}) A user guide to hytech.
\newblock In: \emph{International Workshop on Tools and Algorithms for the Construction and Analysis of Systems}. Springer.

\bibitem[{Henzinger et~al.(1995{\natexlab{b}})Henzinger, Kopke, Puri and Varaiya}]{henzinger1995s}
Henzinger TA, Kopke PW, Puri A and Varaiya P (1995{\natexlab{b}}) What's decidable about hybrid automata?
\newblock In: \emph{Proceedings of the twenty-seventh annual ACM symposium on Theory of computing}.

\bibitem[{Herbert(2020)}]{herbert2020safe}
Herbert SL (2020) \emph{Safe real-world autonomy in uncertain and unstructured environments}.
\newblock University of California, Berkeley.

\bibitem[{Johnson et~al.(2016)Johnson, Burden and Koditschek}]{johnson2016hybrid}
Johnson AM, Burden SA and Koditschek DE (2016) A hybrid systems model for simple manipulation and self-manipulation systems.
\newblock \emph{The International Journal of Robotics Research} .

\bibitem[{Kim et~al.(2019)Kim, Carlo, Katz, Bledt and Kim}]{kim2019highly}
Kim D, Carlo JD, Katz B, Bledt G and Kim S (2019) Highly dynamic quadruped locomotion via whole-body impulse control and model predictive control.

\bibitem[{Kochdumper and Althoff(2020)}]{taylor_nonlin_guard_20}
Kochdumper N and Althoff M (2020) Reachability analysis for hybrid systems with nonlinear guard sets.
\newblock HSCC '20. Association for Computing Machinery.
\newblock ISBN 9781450370189.

\bibitem[{Kong et~al.(2021)Kong, Payne, Council and Johnson}]{kong2021salted}
Kong NJ, Payne JJ, Council G and Johnson AM (2021) The salted kalman filter: Kalman filtering on hybrid dynamical systems.
\newblock \emph{Automatica} .

\bibitem[{Kong et~al.(2015)Kong, Gao, Chen and Clarke}]{delta_reach2015}
Kong S, Gao S, Chen W and Clarke E (2015) dreach: $\delta$-reachability analysis for hybrid systems.
\newblock In: \emph{Tools and Algorithms for the Construction and Analysis of Systems: 21st International Conference, TACAS 2015, Held as Part of the European Joint Conferences on Theory and Practice of Software, ETAPS 2015, London, UK, April 11-18, 2015, Proceedings 21}. Springer.

\bibitem[{Lygeros(1996)}]{lygeros1996hierarchical}
Lygeros J (1996) \emph{Hierarchical, hybrid control of large-scale systems}.
\newblock University of California, Berkeley.

\bibitem[{Lygeros(2004)}]{lygeros2004reachability}
Lygeros J (2004) On reachability and minimum cost optimal control.
\newblock \emph{Automatica} .

\bibitem[{Lygeros et~al.(1998)Lygeros, Tomlin and Sastry}]{lygeros1998controller}
Lygeros J, Tomlin C and Sastry S (1998) On controller synthesis for nonlinear hybrid systems.
\newblock In: \emph{Proceedings of the 37th IEEE Conference on Decision and Control (Cat. No. 98CH36171)}. IEEE.

\bibitem[{Lygeros et~al.(2008)Lygeros, Tomlin and Sastry}]{lygeros2008hybrid}
Lygeros J, Tomlin C and Sastry S (2008) Hybrid systems: modeling, analysis and control.
\newblock \emph{Electronic Research Laboratory, University of California, Berkeley, CA, Tech. UCB/ERL M} .

\bibitem[{Maghenem and Sanfelice(2019)}]{HCBF_inclusion}
Maghenem M and Sanfelice RG (2019) Characterizations of safety in hybrid inclusions via barrier functions.
\newblock In: \emph{Proceedings of the 22nd ACM International Conference on Hybrid Systems: Computation and Control}.

\bibitem[{Maiga et~al.(2015)Maiga, Ramdani, Trav{\'e}-Massuy{\`e}s and Combastel}]{nonlin_zono_2015}
Maiga M, Ramdani N, Trav{\'e}-Massuy{\`e}s L and Combastel C (2015) A comprehensive method for reachability analysis of uncertain nonlinear hybrid systems.
\newblock \emph{IEEE Transactions on Automatic Control} .

\bibitem[{Maler et~al.(1995)Maler, Pnueli and Sifakis}]{maler1995synthesis}
Maler O, Pnueli A and Sifakis J (1995) On the synthesis of discrete controllers for timed systems.
\newblock In: \emph{STACS 95: 12th Annual Symposium on Theoretical Aspects of Computer Science Munich, Germany, March 2--4, 1995 Proceedings 12}. Springer.

\bibitem[{Margolis and Agrawal(2022)}]{margolis2022walktheseways}
Margolis GB and Agrawal P (2022) Walk these ways: Tuning robot control for generalization with multiplicity of behavior.
\newblock \emph{Conference on Robot Learning} .

\bibitem[{Miki et~al.(2022)Miki, Lee, Hwangbo, Wellhausen, Koltun and Hutter}]{miki2022learning}
Miki T, Lee J, Hwangbo J, Wellhausen L, Koltun V and Hutter M (2022) Learning robust perceptive locomotion for quadrupedal robots in the wild.
\newblock \emph{Science robotics} .

\bibitem[{Mitchell(2004)}]{mitchell2004toolbox}
Mitchell I (2004) A toolbox of level set methods.
\newblock \url{https://www.cs.ubc.ca/~mitchell/ToolboxLS/toolboxLS-1.1.pdf, Tech. Rep. TR-2004-09}.

\bibitem[{Mitchell et~al.(2005)Mitchell, Bayen and Tomlin}]{mitchell2005time}
Mitchell I, Bayen A and Tomlin CJ (2005) A time-dependent {Hamilton-Jacobi} formulation of reachable sets for continuous dynamic games.
\newblock \emph{IEEE Transactions on Automatic Control (TAC)} .

\bibitem[{Mitchell and Tomlin(2000)}]{air_3modes_200l}
Mitchell I and Tomlin CJ (2000) Level set methods for computation in hybrid systems.
\newblock In: \emph{International workshop on hybrid systems: Computation and control}. Springer.

\bibitem[{Mitchell(2008)}]{mitchell2008flexible}
Mitchell IM (2008) The flexible, extensible and efficient toolbox of level set methods.
\newblock \emph{Journal of Scientific Computing} 35(2): 300--329.

\bibitem[{Parwana et~al.(2025)Parwana, Black, Hoxha, Okamoto, Fainekos, Prokhorov and Panagou}]{Parwana2025hybridMPPI}
Parwana H, Black M, Hoxha B, Okamoto H, Fainekos G, Prokhorov D and Panagou D (2025) Risk-aware mppi for stochastic hybrid systems.
\newblock In: \emph{2025 American Control Conference (ACC)}.

\bibitem[{Robey et~al.(2021)Robey, Lindemann, Tu and Matni}]{HCBF_learn}
Robey A, Lindemann L, Tu S and Matni N (2021) Learning robust hybrid control barrier functions for uncertain systems.
\newblock \emph{IFAC-PapersOnLine} .

\bibitem[{Schmerling and Pavone(2023)}]{hj_reach_ASL2023}
Schmerling E and Pavone M (2023) hj reachability: Hamilton-jacobi reachability analysis in jax.
\newblock \urlprefix\url{https://github.com/StanfordASL/hj_reachability}.

\bibitem[{Tang and Althoff(2023)}]{contact_zono_2023}
Tang C and Althoff M (2023) Formal verification of robotic contact tasks via reachability analysis.
\newblock \emph{arXiv preprint arXiv:2307.13977} .

\bibitem[{Todorov et~al.(2012)Todorov, Erez and Tassa}]{todorov2012mujoco}
Todorov E, Erez T and Tassa Y (2012) Mujoco: A physics engine for model-based control.
\newblock In: \emph{2012 IEEE/RSJ international conference on intelligent robots and systems}. IEEE.

\bibitem[{Tomlin et~al.(1999)Tomlin, Lygeros and Sastry}]{air_7modes_1999}
Tomlin C, Lygeros J and Sastry S (1999) Computing controllers for nonlinear hybrid systems.
\newblock In: \emph{Hybrid Systems: Computation and Control: Second International Workshop, HSCC’99 Berg en Dal, The Netherlands, March 29--31, 1999 Proceedings 2}. Springer.

\bibitem[{Tomlin et~al.(1996)Tomlin, Pappas, Lygeros, Godbole, Sastry and Meyer}]{tomlin1996hybrid}
Tomlin C, Pappas G, Lygeros J, Godbole D, Sastry S and Meyer G (1996) Hybrid control in air traffic management systems.
\newblock \emph{IFAC Proceedings Volumes} .

\bibitem[{Yang et~al.(2024)Yang, Black, Fainekos, Hoxha, Okamoto and Mangharam}]{HCBF_filter}
Yang S, Black M, Fainekos G, Hoxha B, Okamoto H and Mangharam R (2024) Safe control synthesis for hybrid systems through local control barrier functions.
\newblock In: \emph{2024 American Control Conference (ACC)}.

\bibitem[{Yovine(1997)}]{yovine1997kronos}
Yovine S (1997) Kronos: A verification tool for real-time systems.
\newblock \emph{Int. J. Softw. Tools Technol. Transf.} .

\bibitem[{Zeng et~al.(2021)Zeng, Zhang and Sreenath}]{zeng2021safety}
Zeng J, Zhang B and Sreenath K (2021) Safety-critical model predictive control with discrete-time control barrier function.
\newblock In: \emph{2021 American control conference (ACC)}. IEEE, pp. 3882--3889.

\end{thebibliography}
